\def\BibTeX{{\rm B\kern-.05em{\sc i\kern-.025em b}\kern-.08em
    T\kern-.1667em\lower.7ex\hbox{E}\kern-.125emX}}
\newtheorem{theorem}{Theorem}[section]
\newtheorem{lemma}[theorem]{Lemma}
\newtheorem{corollary}[theorem]{Corollary}
\newtheorem{proposition}[theorem]{Proposition}
\newtheorem{remark}[theorem]{Remark}
\newtheorem{assumption}{Assumption}
\providecommand{\norm}[1]{\left\lVert#1\right\rVert}
\providecommand{\E}{{\mathbb E}}
\providecommand{\E}[1]{{\mathbb E}\left.#1\right. }        
\providecommand{\Eb}[1]{{\mathbb E}\left[#1\right] }       
\newenvironment{talign*}
{\csname align*\endcsname}
{\endalign}
\newcommand*{\algrule}[1][\algorithmicindent]{\makebox[#1][l]{\hspace*{.5em}\thealgruleextra\vrule height \thealgruleheight depth \thealgruledepth}}%
\newcommand*{\thealgruleextra}{}
\newcommand*{\thealgruleheight}{.75\baselineskip}
\newcommand*{\thealgruledepth}{.25\baselineskip}
\def\ALG@printindent{%
	\ifnum \theALG@nested>0
	\ifx\ALG@text\ALG@x@notext
	\else
		\unskip
		\addvspace{-1pt}
		\ALG@printindent@tempcnta=1
		\loop
		\algrule[\csname ALG@ind@\the\ALG@printindent@tempcnta\endcsname]%
		\advance \ALG@printindent@tempcnta 1
		\ifnum \ALG@printindent@tempcnta<\numexpr\theALG@nested+1\relax
			\repeat
		\fi
	\fi
}%
\patchcmd{\ALG@doentity}{\noindent\hskip\ALG@tlm}{\ALG@printindent}{}{\errmessage{failed to patch}}
\newbox\statebox
\newcommand{\myState}[1]{%
	\setbox\statebox=\vbox{#1}%
	\edef\thealgruleheight{\dimexpr \the\ht\statebox+1pt\relax}%
	\edef\thealgruledepth{\dimexpr \the\dp\statebox+1pt\relax}%
	\ifdim\thealgruleheight<.75\baselineskip
		\def\thealgruleheight{\dimexpr .75\baselineskip+1pt\relax}%
	\fi
	\ifdim\thealgruledepth<.25\baselineskip
		\def\thealgruledepth{\dimexpr .25\baselineskip+1pt\relax}%
	\fi
	\State #1%
	\def\thealgruleheight{\dimexpr .75\baselineskip+1pt\relax}%
	\def\thealgruledepth{\dimexpr .25\baselineskip+1pt\relax}%
}
\begin{document}

\title{FedRec+: Enhancing Privacy and Addressing Heterogeneity in Federated Recommendation Systems}
\author{Lin Wang, Zhichao Wang, Xi Leng, Xiaoying Tang
\thanks{The authors are with the School of Science and Engineering, The Chinese University of Hong Kong (Shenzhen), Shenzhen 518172, China (email: {linwang1,zhichaowang, xileng}@link.cuhk.edu.cn, tangxiaoying@cuhk.edu.cn).
}
}

\maketitle

\begin{abstract}
Preserving privacy and reducing communication costs for edge users pose significant challenges in recommendation systems. Although federated learning has proven effective in protecting privacy by avoiding data exchange between clients and servers, it has been shown that the server can infer user ratings based on updated non-zero gradients obtained from two consecutive rounds of user-uploaded gradients. Moreover, federated recommendation systems (FRS) face the challenge of heterogeneity, leading to decreased recommendation performance.
In this paper, we propose FedRec+, an ensemble framework for FRS that enhances privacy while addressing the heterogeneity challenge. FedRec+ employs optimal subset selection based on feature similarity to generate near-optimal virtual ratings for pseudo items, utilizing only the user's local information. This approach reduces noise without incurring additional communication costs. Furthermore, we utilize the Wasserstein distance to estimate the heterogeneity and contribution of each client, and derive optimal aggregation weights by solving a defined optimization problem. Experimental results demonstrate the state-of-the-art performance of FedRec+ across various reference datasets.


\end{abstract}


 \section{Introduction}

Recommender systems have experienced significant advancements in recent years, enabling personalized recommendations for users \cite{yang2020federated}. However, traditional centralized recommender systems raise concerns about privacy leakage and data integration limitations, as they rely on a central server to store user data \cite{rendle2012factorization,mnih2007probabilistic}.
On the other hand, federated learning (FL) is a distributed learning scheme that ensures privacy preservation by allowing participants to collaboratively train a machine learning model without sharing data \cite{mcmahan2017communication}.
The combination of federated learning and recommendation systems gives rise to federated recommendation systems (FRS), offering a promising solution for privacy-preserving recommendations~\cite{sun2022survey}.



FRS addresses privacy and data security concerns by decentralizing the recommendation process. User data remains localized on individual devices or servers, and models are trained locally without sharing data. This decentralized approach enhances user privacy and fosters trust. Various approaches, such as federated matrix factorization~\cite{ammad2019federated,liu2021fedct}, federated collaborative filtering~\cite{dolui2019towards,hu2020locality}, and federated deep learning~\cite{minto2021stronger}, distribute the training process across each local parity and aggregate gradients on a central server.

However, privacy preservation remains a major challenge in FRS. Although data decentralization reduces privacy risks compared to conventional data-center training, transmitted gradients between parties can still leak user privacy \cite{wu2021fedgnn}. To address this, various privacy protection mechanisms, including pseudo items~\cite{lin2020fedrec}, homomorphic encryption~\cite{chai2020secure,lin2021fr}, secret sharing~\cite{lin2021fr}, and differential privacy~\cite{dolui2019towards,wu2021fedgnn}, have been incorporated into FRS. Pseudo-item method, in particular, has gained attention due to its low computation and communication costs. By uploading gradients of both interacted and randomly sampled unrated items, Pseudo items prevent the server from inferring user interactions, as shown in Figure~\ref{illustration of pseudo}. However, existing pseudo-item methods suffer from limitations such as introducing significant noise or imposing high communication burdens \cite{lin2020fedrec,liang2021fedrec++}.

Another challenge in FRS is the heterogeneity across local datasets and models, which complicates the aggregation of local recommendations into a coherent global recommendation~\cite {karimireddy2020scaffold}. 

Therefore, in this work, we are primarily interested in addressing two challenges in FRS: \emph{(1) Design an effective pseudo items method that is low noise as well as low communication cost. (2) Design an aggregation algorithm to address the heterogeneity challenge in FRS.}
To effectively address these challenges, we propose an innovative framework called \textbf{FedRec+}, which includes an improved pseudo items method that uses feature similarity to select a subset for virtual rate assignment and an optimal aggregation strategy based on the Wasserstein Distance, as illustrated in Figure~\ref{framework of fedrec+}. FedRec+ effectively preserves client privacy with low computation and communication costs and alleviates the heterogeneity problem in FRS. FedRec+ guarantees convergence with a controllable noise term.

\begin{figure}[!t]
\centering
\includegraphics[width=3.5in]{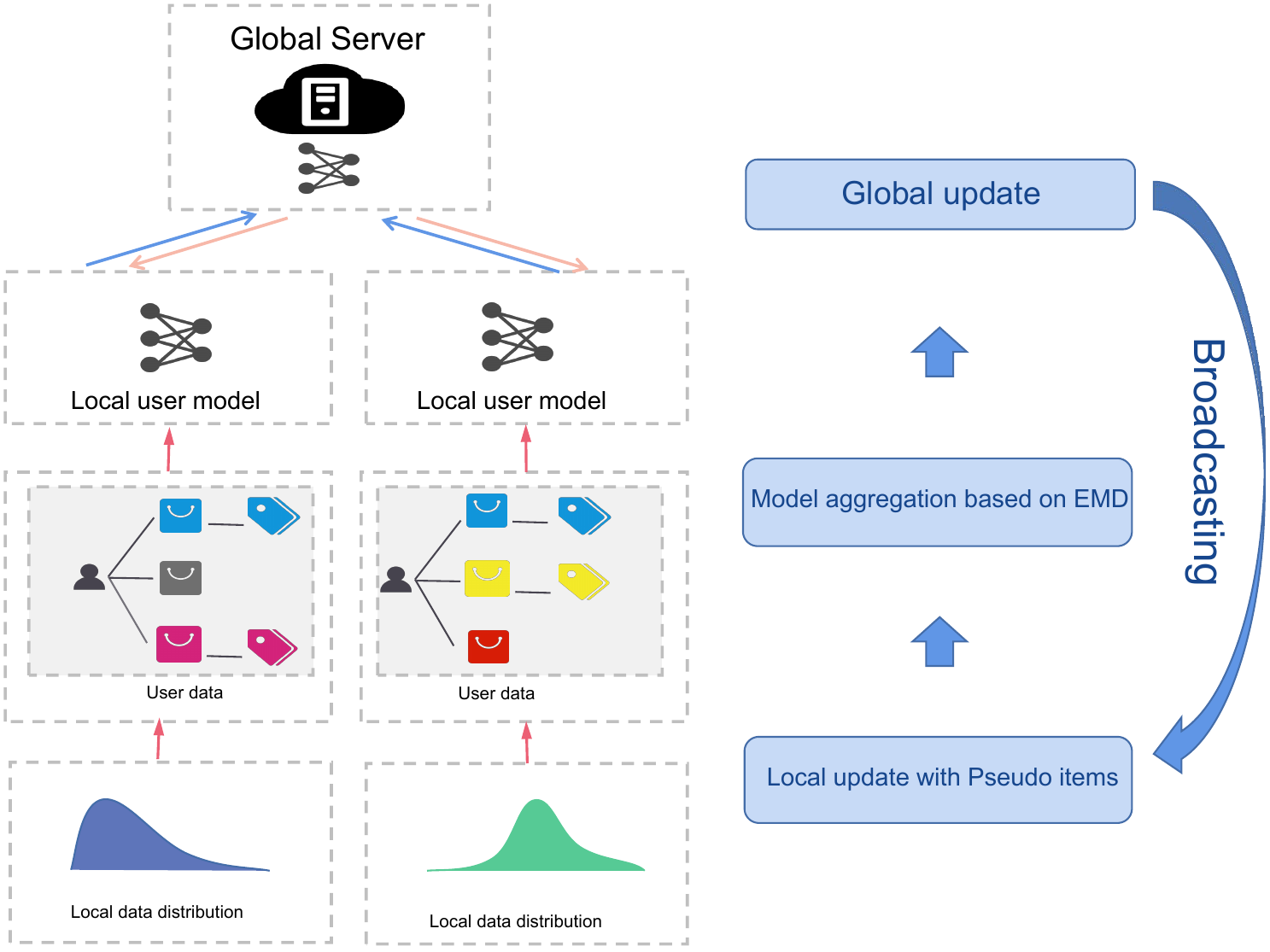}
\caption{\textbf{Framework of FedRec+.} FedRec+ consists of a privacy-preserving component and a dynamic aggregation component. Specifically, (1) it incorporates an enhanced pseudo-items method to safeguard the privacy of interacted items, and (2) it employs an optimal aggregation strategy to address the heterogeneity challenge.}
\label{framework of fedrec+}
\end{figure}

The contributions of this paper are summarized as follows:
\begin{itemize}[leftmargin=*]
\item We propose FedRec+, a privacy-enhancing FRS algorithm with explicit feedback. 
FedRec+ utilizes feature similarity to generate low-noise pseudo items and incorporates an optimal aggregation strategy derived from the Wasserstein distance between the global and local models to address the statistical heterogeneity problem.

\item We provide a convergence analysis of FedRec+, demonstrating a convergence rate of $\mathcal{O}(\frac{1}{\sqrt{T}}+\frac{1}{T})$. This analysis explicitly highlights the impact of the pseudo-item method and the Wasserstein Distance based aggregation method on the convergence results.

\item We evaluate FedRec+'s performance using public datasets and find that it excels in recommendation performance. Additionally, our ablation study explores the impact of the number of pseudo items.
\end{itemize}

\subsection{Related Work}
\label{relate work}

Several works have explored the use of federated learning in the context of recommendation systems. \cite{ammad2019federated} propose a federated collaborative filtering method for recommendation systems. Other works that follow this line of research include \cite{dolui2019towards, minto2021stronger, hu2020locality}. Additionally, deep learning-based FedRS models have been proposed to leverage user data while ensuring privacy compliance \cite{wu2021fedgnn}.

\paragraph{FRS with Pseudo Items}
To address privacy concerns in FRS, the use of pseudo items has been proposed. \cite{lin2020fedrec} Introduce the concept of pseudo items to protect users' interacted information. However, the vanilla approach of randomly selecting unrated items as pseudo items introduces significant noise.  \cite{liang2021fedrec++} Divide clients into different groups, where one group records the gradients of unrated items uploaded by another group, effectively reducing the noise caused by unrated items. However, this approach requires additional communication and storage costs between users, which can lead to privacy leakage issues \cite{liu2021privacy}. \cite{lin2021fr} Combine secret sharing and pseudo items mechanisms to provide stronger privacy guarantees, while \cite{wu2021fedgnn} combine pseudo items and Local Differential Privacy (LDP) mechanisms to protect user interaction behaviors and ratings in FRS. However, none of these methods effectively address the challenge of large noise from pseudo items while maintaining a low communication cost. In this paper, we propose FedRec+ that leverages each client's own data information to select optimal unrated items, minimizing noise without requiring communication between users.

\paragraph{FRS with Aggregation}
While aggregation algorithms for federated learning (FL) have been extensively studied for various purposes such as convergence acceleration \cite{wang2022delta, chen2021dynamic}, fairness enhancement \cite{wang2023fedeba+}, and robustness improvement \cite{pillutla2022robust}, limited research has been conducted on aggregation algorithms specifically tailored for FRS. \cite{muhammad2020fedfast} Propose FedFast, a federated recommendation model with improved aggregation and update policies. However, there has been no dedicated work addressing the heterogeneity problem in FRS from an aggregation perspective. In this paper, we propose an aggregation algorithm for FRS that utilizes Wasserstein Distance to constrain the objective, effectively tackling the heterogeneity challenge.

\section{SYSTEM MODEL and ALGORITHM}
\label{sec sys architecture}

In this section, we first state the problem setup (Sec~\ref{sec problem setup}), and after explaining the FedRec+ algorithm~\ref{algorithm} (Sec~\ref{sec feature similarity} and Sec~\ref{sec EMD aggregation}), we present our theoretical result along with the underlying assumptions (Sec~\ref{sec analysis}).

\textbf{Notations:} Following the commonly used notations in probabilistic matrix factorization~\cite{koren2009matrix}, the rating of a user $u$ to an item $i$ is calculated as the inner product of their latent feature vectors, i.e., $\hat{r}_{ui} = \textbf{U}_u\textbf{V}_i^{\top}$, where $\textbf{U}_u \in \mathbb{R}^{1\times d}$ and $\textbf{V}_i \in \mathbb{R}^{1\times d}$ are the latent feature vectors of user $u$ and item $i$, respectively.
The ground-truth rating of item $i$ by user $u$ is denoted as $r_{ui}$. The sets of rated and unrated items for user $u$ are represented as $\mathcal{I}_u$ and $\mathcal{I}_u^{\prime}$, respectively. 
The local and global learning rates are denoted as $\eta_L$ and $\eta$, respectively. $b \in [0, B]$ and $k \in [0, K] $ are local batch and local epoch respectively. Boldface characters are used to represent vectors.

\subsection{Problem Setup}
\label{sec problem setup}

\begin{figure}[!t]
\centering
\includegraphics[width=3.5in]{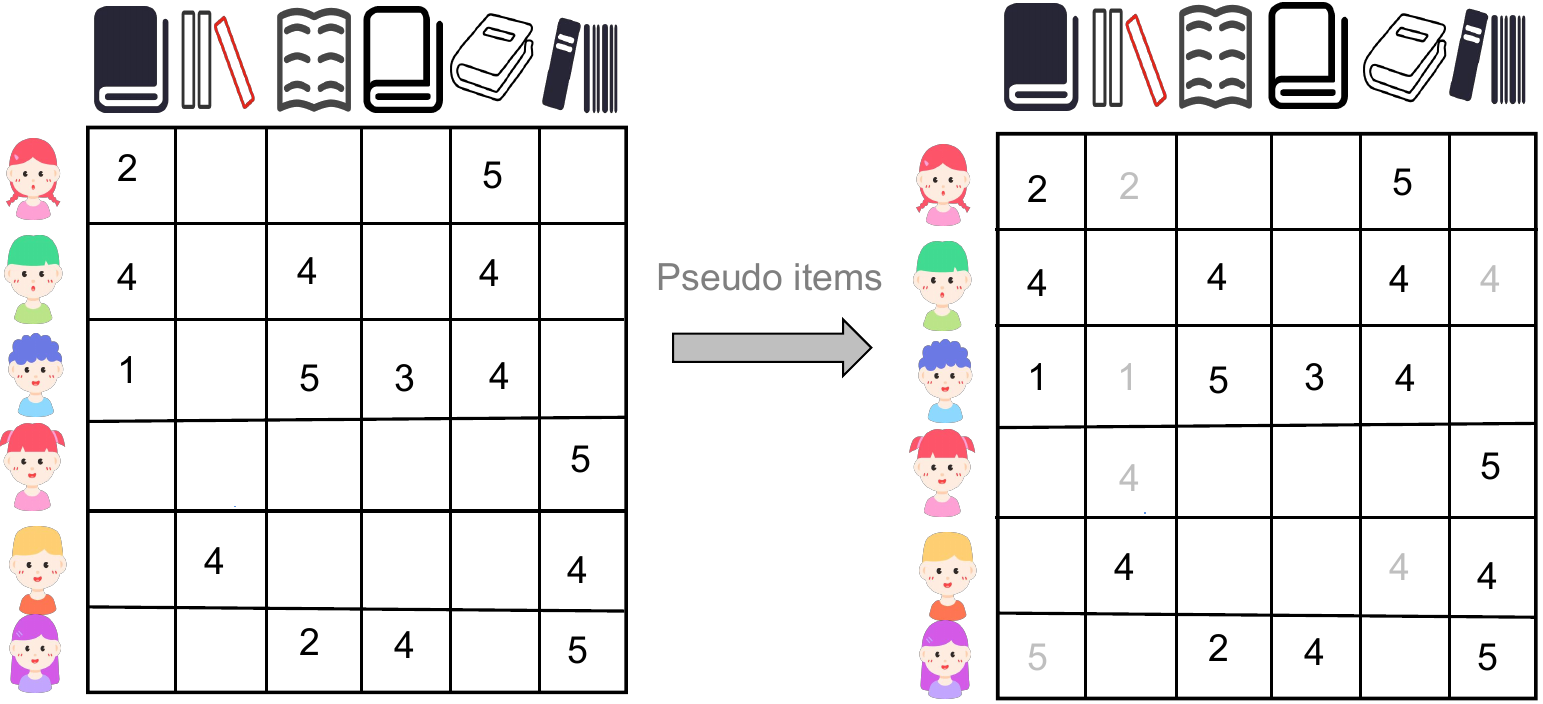}
\caption{\textbf{Illustration of pseudo-item method.} 
To maintain privacy during rating data gradient uploads, the gradients of rated and unrated items are mixed to prevent privacy leaks, safeguarding sensitive information and ensuring privacy protection.
}
\label{illustration of pseudo}
\vspace{-.5em}
\end{figure}


Before presenting our approach, we provide an overview of the federated matrix factorization (FedMF) algorithm.

In a recommender system, the goal is to fill in missing values of a rating matrix $\boldsymbol{R} \in \mathbb{R}^{n \times m}$. Matrix factorization (MF) is a widely used approach that decomposes the matrix into two low-rank matrices. The rating $r_{u i}$ that user $u$ gives to item $i$ can be approximated as:
\begin{align}
\hat{r}_{u i}=\boldsymbol{U}_u \boldsymbol{V}_i^{\top},
\end{align}
where $\boldsymbol{V}_i$ represents the latent factors of item $i$, and $\boldsymbol{U}_u$ represents the latent factors of user $u$. The latent factors are learned by minimizing a loss function that incorporates the known ratings and regularization terms:

\begin{align}
\min _{\boldsymbol{V}_i, \boldsymbol{U}_u} \frac{1}{2} \sum_{(u, i) \in \mathcal{I}_{u,i}}\left(r_{u i}-\boldsymbol{U}_u \boldsymbol{V}_i^{\top}\right)^2+\lambda\left(\left\|\boldsymbol{V}_i\right\|_2^2+\left\|\boldsymbol{U}_u\right\|_2^2\right) \, ,
\end{align}

where $\mathcal{I}_{u,i}$ represents the set of user-item pairs with known ratings, and $\lambda$ is the regularization coefficient.
 Stochastic gradient descent is utilized to update each parameter: 

\begin{align}
\boldsymbol{V}_i \leftarrow \boldsymbol{V}_i-\eta_L \cdot\left(\lambda \cdot \boldsymbol{V}_i-e_{u i} \cdot \boldsymbol{U}_u\right) \, ,
\end{align}

\begin{align}
\label{U update equation}
 \boldsymbol{U}_u \leftarrow \boldsymbol{U}_u-\eta_L \cdot\left(\lambda \cdot \boldsymbol{U}_u-e_{u i} \cdot \boldsymbol{V}_i\right) \, ,
\end{align}

where $e_{u i}=r_{u i}-\boldsymbol{U}_u \boldsymbol{V}_i^{\top}$ is the prediction error, and $\eta_L$ is the local learning rate.

The vanilla FedMF algorithm \cite{ammad2019federated} extends MF to a federated setting. In FedMF, the item latent factors $\left\{\boldsymbol{V}_i\right\}_{i \in \mathcal{I}}$ are stored on the central server, while each user's latent factors $\boldsymbol{U}_u$ are kept on the local party. The training process consists of the following steps, which are repeated until the convergence of model parameters: (1) The local party downloads item $i$'s latent factors $\boldsymbol{V}_i$ from the server. (2) The local party updates the user's latent factors $\boldsymbol{U}_u$ using its private local data $\boldsymbol{r}_u$. (3) The local party computes the gradients of each item's latent factors $\boldsymbol{g}_{u i}=\lambda \cdot \boldsymbol{V}_i-e_{u i} \cdot \boldsymbol{U}_u$ with  $\boldsymbol{r}_u$ and the updated $\boldsymbol{U}_u$. (4) The local party sends $\boldsymbol{g}_{u i}$ to the server. (5) The server aggregates the gradients $\sum_{u \in \mathcal{U}} \boldsymbol{g}_{u i}$ and updates $\boldsymbol{V}_i$.

However, the vanilla FedMF algorithm suffers from privacy leakage due to the transmitted gradients. The server continuously receives the gradients of the item $i$'s latent vector from user $u$ at step $t-1$ and step $t$:

\begin{align}
\boldsymbol{g}_{u,i}^{t-1}=\lambda \cdot \boldsymbol{V}_i^{t-1}-e_{ui}^{t-1} \cdot \boldsymbol{U}_u^{t-1} \, ,
\end{align}

\begin{align}
\label{graient calculation}
\boldsymbol{g}_{u,i}^{t}=\lambda \cdot \boldsymbol{V}_i^{t}-e_{ui}^{t} \cdot \boldsymbol{U}_u^{t} \, ,
\end{align}

where $\boldsymbol{V}_i^{t-1}$ and $\boldsymbol{V}_i^t$ represent the item $i$'s latent factors at step $t-1$ and step $t$ respectively, and $\boldsymbol{U}_u^{t-1}$ and $\boldsymbol{U}_u^t$ represent the user $u$'s latent factors at step $t-1$ and step $t$ respectively. The server also knows the update rule for the user's latent factors:

\begin{align}
\boldsymbol{U}_u^t = \boldsymbol{U}_u^{t-1}+\gamma \cdot \sum_{i\in \mathcal{I}_u}\left(\lambda \cdot \boldsymbol{U}_u^{t-1}-e_{ui}^t\cdot \boldsymbol{V}_i^t\right) \, ,
\end{align}

where $\mathcal{I}_u$ represents the set of items that user $u$ has rated. Combining these equations, the server can solve for the unknown variables, revealing private raw ratings of each user~\cite{lazard2009thirty}.

To address the gradient leakage problem of vanilla FedMF, several secure FedMF algorithms have been proposed. One such algorithm is FedRec \cite{lin2020fedrec}, which introduces a hybrid filling (HF) strategy to randomly sample unrated items and mix them with rated items. The stochastic gradient descent of FedRec is as follows:
\begin{equation}
    \begin{aligned}
        \nabla V^{\mathrm{HF}}(u, i)=\left\{\begin{array}{l}
\left(U_u \cdot V_i^T-r_{u i}\right) U_u+\lambda V_i, y_{u i}=1, \\
\left(U_u \cdot V_i^T-r_{u i}^{\prime}\right) U_u+\lambda V_i, y_{u i}=0.
\end{array}\right.
    \end{aligned}
\end{equation}
where $r_{u i}$ and $r_{u i}^{\prime}$ are the true observed rating and the virtual rating of user $u$ to item $i$, respectively.

While FedRec ensures privacy protection in rating prediction, the random sampling of items in the hybrid filling strategy introduces noise to the recommendation model, leading to potential performance impacts. This serves as the motivation to develop a lossless version of FedRec, which is crucial for practical deployment in real-world applications.


\subsection{Feature similarity for pseudo items} 
\label{sec feature similarity}

While FedRec ensures privacy protection in rating prediction, the hybrid filling strategy, which involves randomly sampling items, introduces noise that impacts performance. To address this, we aim to design a low-noise scheme for assigning rates to unrated items.

Inspired by feature selection techniques utilizing feature similarity~\cite{mitra2002unsupervised}, we aim to select items with characteristics most similar to the rated items for assigning virtual rates. Feature similarity enables the exploration of hidden relationships in the feature space among recommended items~\cite{qian2022feature}. For instance, assuming that some unrated items in the dataset share similar features with rated items having a specific score, the virtual scores of these similar unrated items would be close to that specific score. To reduce noise while maintaining privacy protection, we selectively choose pseudo items that closely align with a user's existing ratings for hybrid filling.

To learn user and item features, we employ an encoder. For instance, let $E_r = Encoder(V_i^r)$ represent the feature of a rated item and $E_{un} = Encoder(V_i^{un})$ represent the feature of an unrated item. We calculate the cosine similarity between these features. Considering $E_r = [x_1, x_2, \ldots, x_n]$ and $E_{un} = [y_1, y_2, \ldots, y_n]$, the cosine similarity $\theta$ measures the angle between the two vectors and is defined as follows:

\begin{align}
\label{similarity feature formulation}
\text{Sim}(E_r, E_{un}) = \cos \theta = \frac{\sum_{i=1}^n (x_i \cdot y_i)}{\sqrt{\sum_{i=1}^n x_i^2 \cdot \sum_{i=1}^n y_i^2}}.
\end{align}

By selecting the top-k unrated items with the most similar features to the scored items, we obtain low-noise pseudo items. These virtual rates, based on latent relationships in the item feature space, introduce less noise compared to randomly assigned scores or randomly averaged virtual scores.

\begin{algorithm}[t]
    \caption{\small FedRec+}
    \label{algorithm}
    \begin{algorithmic}[1]
    \Require{Initialize the model parameters $\boldsymbol{V}_i$, $\boldsymbol{U}_u$, global and local learning rate $\eta$ and $\eta_l$, number of training rounds $T$.
    }
    \Ensure{Trained model $\boldsymbol{V}_i$ and $\boldsymbol{U}_u$}
    \For{ $t = 1, \ldots, T$}
            \For{Each client $u \in U$,in parallel }
                \State{Sample top-k unrated items $\mathcal{I}_u^{\prime}$ based on feature similarity~\eqref{similarity feature formulation} and assign them virtual rates.}
                \For{$k=0,\cdots,K-1$}
                \myState{Update the user's latent factors $\boldsymbol{U}_u$ using \eqref{U update equation}, average on both rated and unrated items.}
                \myState{Get gradients $\boldsymbol{g}_{u,i}^{t,k}$ \eqref{graient calculation} from both the selected unrated and rated items  $i \in \mathcal{I} \cup \mathcal{I}^{\prime} $. }
                 \myState{Local update:$\boldsymbol{V}_u^{t,k+1}=\boldsymbol{V}_i^{t,k}-\eta_L\boldsymbol{g}_{u,i}^{t,k}$.}
                \EndFor 
                  \myState{Send $\boldsymbol{\Delta}_t^i= \boldsymbol{V}_i^{t,K}-\boldsymbol{V}_i^{t,0} =-\eta_L\sum_{k=0}^{K-1} \boldsymbol{g}_{u,i}^{t,k}$ to server}
            \EndFor  
            \For{Server}
                \State{Server updates $\boldsymbol{V}_i$ and sends to clients:}
                \State{$\boldsymbol{V}_i^{t+1} = \boldsymbol{V}_i^{t}+\eta \sum_{u\in U} p_u \boldsymbol{\Delta}_t^i $, where $p_u$ is based on \eqref{simplied aggregation probability}.}
            \EndFor
        \EndFor
    \end{algorithmic}
\end{algorithm}

\subsection{Wasserstein Distance for Aggregation}
\label{sec EMD aggregation}

\begin{figure}[!t]
\centering
\includegraphics[width=3.5in]{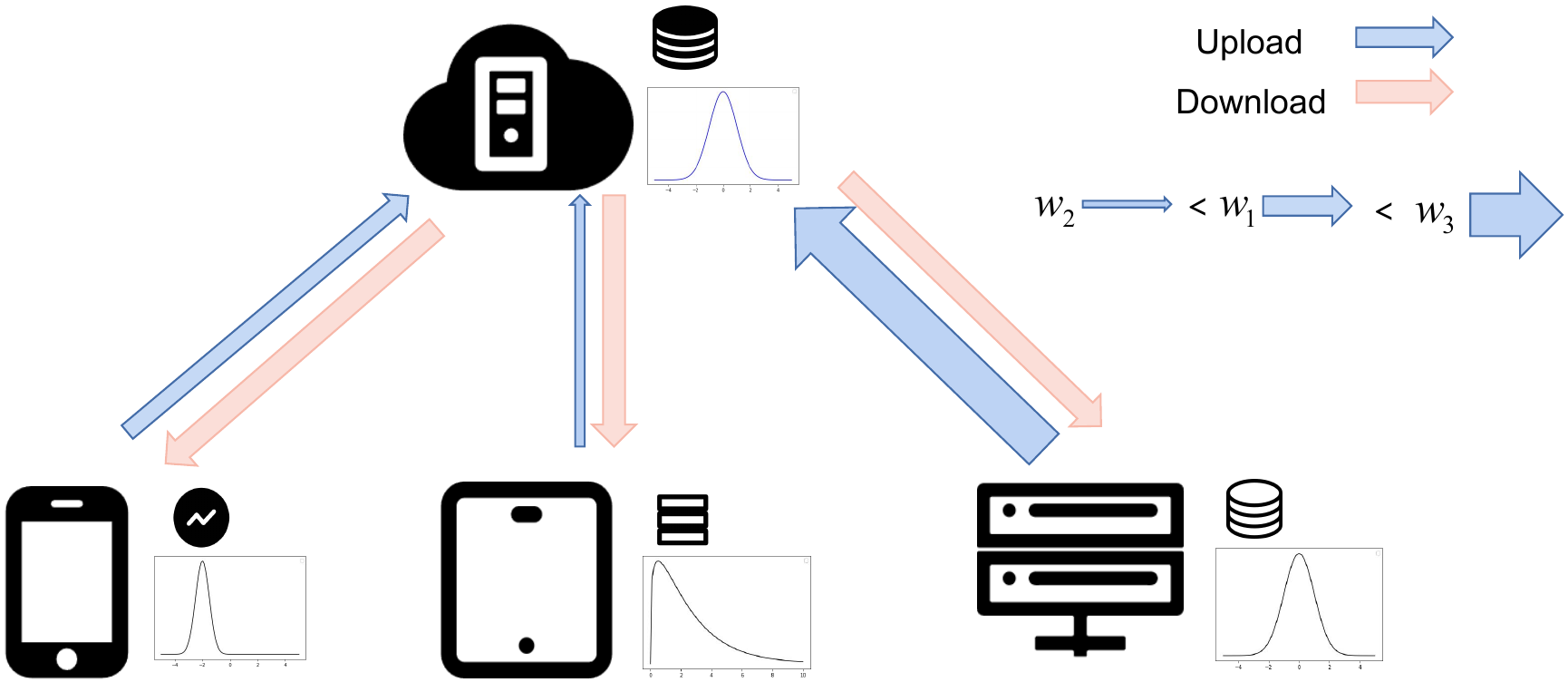}
\caption{\textbf{Illustration of aggregation weights under heterogeneous data.} The aggregation weight is determined inversely proportional to the Wasserstein distance. As the distributions of the local model and global model become closer, the aggregation weight increases accordingly.}
\label{illustration of EMD aggregation}
\vspace{-.5em}
\end{figure}

In this section, we present the derivation of aggregation weights based on Wasserstein distance to address the challenge of statistical heterogeneity in FRS, as depicted in Figure~\ref{illustration of EMD aggregation}.

Wasserstein distance~\cite{villani2009optimal} is a metric on probability distributions inspired by the problem of optimal transport. It is particularly suitable for measuring high-dimensional distributions, even in the absence of overlap. It quantifies the dissimilarity between local models and the global model. 
Wasserstein distance of two distribution $\boldsymbol{\mu}$ and $\boldsymbol{\nu}$ is defined as:
\begin{align}
    \boldsymbol{W}_p(\boldsymbol{\mu}, \boldsymbol{\nu})=\inf _{\boldsymbol{\gamma} \in \boldsymbol{\Gamma}(\boldsymbol{\mu}, \boldsymbol{\nu})} \mathbb{E}_{(\boldsymbol{x}, \boldsymbol{y}) \sim \gamma}\left[\|\boldsymbol{x}-\boldsymbol{y}\|_p\right]\, ,
\end{align}
which generally lacks a closed-form solution. However, if we consider the L2-norm as the geometric metric and simplify the problem to a Gaussian distribution, an analytic solution for the distance can be obtained:
\begin{align}
\label{EMD with gaussian distribution}
    d^2=\left\|\boldsymbol{\mu}_1-\boldsymbol{\mu}_2\right\|_2^2+\operatorname{tr}\left(\left(\boldsymbol{\Sigma}_1^{\frac{1}{2}}-\boldsymbol{\Sigma}_2^{\frac{1}{2}}\right)^2\right) \, .
\end{align}
Here, $\boldsymbol{\mu}_1$, $\boldsymbol{\Sigma}_1$ represent the mean and variance of the first distribution, while $\boldsymbol{\mu}_2$, $\boldsymbol{\Sigma}_2$ represent the mean and variance of the second distribution.

Assuming that each client's aggregation weight is denoted by $p_u$, the server aggregates local models $\boldsymbol{w}_{t+1}^1, \cdots, \boldsymbol{w}_{t+1}^{|U|}$ to compute the global model using the following equation:
\begin{align}
    \overline{\boldsymbol{w}} = \sum_{u=1}^{m}p_u \boldsymbol{w}_{t+1}^u \, .
\end{align}
where $|U| = m$ represents the number of users in FRS.

The distance between the aggregated model and the true global model can be formalized as:
\begin{align}
\label{D equation}
    D=\boldsymbol{W}_2\left(p_{\overline{\boldsymbol{w}}}(\boldsymbol{w}), p_{\boldsymbol{w}_g}(\boldsymbol{w})\right) \, ,
\end{align}
where $\boldsymbol{w}_g$ represents the optimal model parameters based on the global distribution, i.e., the distribution of data gathered from all participants. We derive an upper bound for equation~\eqref{D equation}:
\begin{align}
\label{distance upper bound}
D & =\boldsymbol{W}_2\left(p_{\overline{\boldsymbol{w}}}(\boldsymbol{w}), p_{\boldsymbol{w}_g}(\boldsymbol{w})\right) \notag\\
& =\inf _{\boldsymbol{\gamma} \in \boldsymbol{\Gamma}\left(p_{\overline{\boldsymbol{w}}}, p_{\boldsymbol{w}_g}\right)} \mathbb{E}_{(\boldsymbol{x}, \boldsymbol{y}) \sim \gamma}\left[\|\boldsymbol{x}-\boldsymbol{y}\|_2\right] \notag\\
& \leq m\sum_{u=1}^{m} p_u^2 \inf _{\gamma_u \in \boldsymbol{\Gamma}\left(p_{\boldsymbol{w}_{K}^u}, p_{\boldsymbol{w}_g}\right)} \mathbb{E}_{(\boldsymbol{x}_u, \boldsymbol{y}_u) \sim \gamma_u}\left[\|\boldsymbol{x}_u-\boldsymbol{y}_u\|_2\right] \, .
\end{align} 
In the above equation, we split $\|\boldsymbol{x}-\boldsymbol{y}\|_2$ as $\left\|\sum_{u=1}^{m} p_k\left(\boldsymbol{x}_k-\boldsymbol{y}_k\right)\right\|_2$, where each pair $\left(\boldsymbol{x}_k, \boldsymbol{y}_k\right)$ is supported on $\boldsymbol{\Gamma}\left(p_{\boldsymbol{w}_{K}^u}, p_{\boldsymbol{w}_g}\right)$. The inequality relies on the Cauchy-Schwarz inequality and the independence of $\boldsymbol{w}_{K}^u$. 
By bounding the distance, we establish an upper bound for the gap between the aggregated model and the true global model. Consequently, minimizing the above upper bound effectively approximates the objective of minimizing equation \eqref{D equation}.

\begin{lemma}
\label{lemma gaussian}
Let $\xi_{u,k}$ be a sample from a local dataset uniformly at random. For a sufficiently large batch size $B$, the finite-dimensional vector $g = {g_1, g_2, \cdots, g_K }$ converges to a joint distribution approximately according to the Central Limit Theorem, where $g_k=\frac{1}{B}\sum_{b=1}^{B}\nabla_w F(w_k, \xi_{k,b})$ for $k\in \{1,2,\cdots, K\}$. This implies that, with mini-batch stochastic gradient descent, the sum of all local updates converges to a Gaussian distribution.
\end{lemma}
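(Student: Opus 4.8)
\textbf{Proof Proposal for Lemma~\ref{lemma gaussian}.}

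The plan is to establish the Gaussian limit in two stages: first a componentwise Central Limit Theorem for each mini-batch gradient $g_k$, and then an extension to the joint distribution of the finite-dimensional vector $g = (g_1,\dots,g_K)$, from which the conclusion about the sum of local updates follows by linearity. First I would fix an iterate $w_k$ and note that, since $\xi_{k,b}$ are drawn uniformly at random from the local dataset, the summands $\nabla_w F(w_k,\xi_{k,b})$ for $b = 1,\dots,B$ are i.i.d.\ (or, if sampling without replacement, exchangeable) with common mean $\mu_k := \mathbb{E}_{\xi}[\nabla_w F(w_k,\xi)] = \nabla f_k(w_k)$ and finite covariance $\Sigma_k := \mathrm{Cov}_\xi(\nabla_w F(w_k,\xi))$; finiteness of $\Sigma_k$ is exactly the bounded-variance hypothesis standard in this literature (and implicitly among the paper's assumptions). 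Applying the multivariate Lindeberg--Lévy CLT to $g_k = \frac{1}{B}\sum_{b=1}^B \nabla_w F(w_k,\xi_{k,b})$ gives $\sqrt{B}\,(g_k - \mu_k) \xrightarrow{d} \mathcal{N}(0,\Sigma_k)$, so for large $B$ each $g_k$ is approximately Gaussian with mean $\mu_k$ and covariance $\Sigma_k/B$.

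Next I would handle the joint statement. The subtlety is that $w_{k+1}$ depends on $g_k$, so the $g_k$ are not independent across $k$; however, conditioning on the filtration $\mathcal{F}_k$ generated by $\xi_{1,\cdot},\dots,\xi_{k-1,\cdot}$, the vector $g_k$ is a normalized sum of i.i.d.\ terms and hence conditionally asymptotically Gaussian. I would then argue that the stacked vector $(g_1,\dots,g_K)$ converges jointly to a Gaussian by a Cramér--Wold device: take an arbitrary linear combination $\sum_k a_k^\top g_k$, write it as a single triangular array over the $KB$ sampled indices, and verify the Lindeberg condition for that array; since $K$ is fixed and each block is a clean i.i.d.\ average, the Lindeberg condition reduces to the per-block conditions already verified. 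This yields joint asymptotic normality of $g$ with some block covariance matrix $\Sigma \in \mathbb{R}^{Kd\times Kd}$ (the cross-blocks being small but not necessarily zero, reflecting the SGD coupling). Finally, the sum of all local updates is $\boldsymbol{\Delta} = -\eta_L \sum_{k=0}^{K-1} g_k$, a fixed linear image of the jointly Gaussian vector $g$, hence itself approximately Gaussian; its mean and covariance are obtained by the corresponding linear transformations of $(\mu_k)$ and $\Sigma$, which then feeds into the closed-form Wasserstein expression~\eqref{EMD with gaussian distribution}.

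The main obstacle I anticipate is making the joint/dependence argument rigorous rather than merely invoking "the CLT." Because the later iterates are randomized through earlier gradients, a naive appeal to the i.i.d.\ CLT applies only per coordinate $k$; upgrading to the joint law requires either the conditional-CLT / martingale-array route sketched above or an explicit smoothness assumption so that $w_k$ concentrates around a deterministic trajectory (e.g.\ $w_k \approx \bar w_k$ up to $O(1/\sqrt{B})$ fluctuations), after which a delta-method linearization of $\nabla_w F(w_k,\cdot)$ around $\bar w_k$ makes all $K$ blocks jointly asymptotically Gaussian. A secondary, more pedestrian point is that "converges to a Gaussian distribution" is an asymptotic-in-$B$ statement, so I would state the result as an approximation with the understanding that the error is controlled by Berry--Esseen-type bounds of order $1/\sqrt{B}$ under a bounded third-moment assumption; I would not grind through those constants, but I would flag that they are what justifies the word "approximately."
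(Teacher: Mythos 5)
Your proposal is sound but follows a genuinely different route from the paper's. The paper's proof regroups the $KB$ samples by batch index: it stacks the $K$ per-epoch gradients sharing index $b$ into a single $Kd$-dimensional vector $\tilde{\boldsymbol{g}}_b=\bigl(\nabla_{\boldsymbol{w}}F(\boldsymbol{w}_1,\boldsymbol{\xi}_{1,b}),\dots,\nabla_{\boldsymbol{w}}F(\boldsymbol{w}_K,\boldsymbol{\xi}_{K,b})\bigr)$, writes $\boldsymbol{g}=\frac{1}{B}\sum_b\tilde{\boldsymbol{g}}_b$, asserts that the $\tilde{\boldsymbol{g}}_b$ are i.i.d.\ with bounded covariance, and applies a single multivariate CLT; the Gaussianity of the summed update then follows because $\overline{\boldsymbol{g}}=\mathbf{1}^T\boldsymbol{g}$ is a linear image, exactly as in your last step. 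You instead proceed epoch by epoch, condition on the filtration, and assemble the joint law via Cram\'er--Wold over a triangular array. The dependence issue you flag is real and is in fact the weak point of the paper's own argument: since $\boldsymbol{w}_k$ for $k\geq 2$ is a function of \emph{all} batch-$b'$ samples from earlier epochs, the vectors $\tilde{\boldsymbol{g}}_b$ and $\tilde{\boldsymbol{g}}_{b'}$ share the random iterates $\boldsymbol{w}_2,\dots,\boldsymbol{w}_K$ and are therefore not independent as the paper claims; they are only conditionally i.i.d.\ given the trajectory. Your conditional-CLT/martingale-array route, or the concentration-plus-delta-method variant you sketch (where $\boldsymbol{w}_k$ is replaced by a deterministic limit trajectory up to $O(1/\sqrt{B})$ fluctuations), is what is actually needed to make the joint statement rigorous. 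The paper's grouping buys a one-line appeal to the classical multivariate CLT at the cost of an independence claim that does not literally hold; your approach costs more machinery but closes that gap, and your Berry--Esseen remark correctly identifies what underwrites the word ``approximately'' in the lemma statement.
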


\begin{proof}
    For any constant of the local epoch $K$, we can rewrite gradient vector $\boldsymbol{g}$ as 
    \begin{align}
        \boldsymbol{g}=\frac{1}{B} \sum_{b=1}^B\left(\nabla_{\boldsymbol{w}} F\left(\boldsymbol{w}_1, \boldsymbol{\xi}_{1,b}\right), \cdots, \nabla_{\boldsymbol{w}} F\left(\boldsymbol{w}_K, \boldsymbol{\xi}_{K, b}\right)\right) \, ,
    \end{align}
    let $\tilde{\boldsymbol{g}}_b=\left(\nabla_{\boldsymbol{w}} F\left(\boldsymbol{w}_1, \boldsymbol{\xi}_{1, b}\right), \cdots, \nabla_{\boldsymbol{w}} F\left(\boldsymbol{w}_K, \boldsymbol{\xi}_{K, b}\right)\right)$, then we have:
    \begin{align}
        \boldsymbol{g}=\frac{1}{B} \sum_{k=1}^B \tilde{\boldsymbol{g}}_b \, .
    \end{align}
    As long as the gradient norm is upper bounded and $K$ is finite, $\tilde{\boldsymbol{g}}_b$ follows some complex distribution with bounded covariance matrix. Since $\boldsymbol{\xi}_{k, b}$  is sampled independently from the same distribution, $\boldsymbol{g}$ is the mean vector of $\tilde{\boldsymbol{g}}_1, \cdots, \tilde{\boldsymbol{g}}_B$, which are independent and identically distributed (i.i.d.) random vectors. Therefore, according to the Central Limit Theorem, $\boldsymbol{g}$ converges to $\mathcal{N}(\boldsymbol{\mu}, \boldsymbol{\Sigma})$ in distribution.
\end{proof}
Lemma~\ref{lemma gaussian} implies that, with mini-batch stochastic gradient descent, the sum of all local updates converges to a Gaussian distribution, i.e.:
\begin{align}
\label{gradient gaussian distribution converge}
    \overline{\boldsymbol{g}}=\boldsymbol{w}_{K}-\boldsymbol{w}_1=\sum_{k=1}^K \boldsymbol{g}_k=\mathbf{1}^T \boldsymbol{g} \, ,
\end{align}
where 
$\overline{\boldsymbol{g}}$ is a linear transformation of a joint Gaussian vector, thus it conforms to a Gaussian distribution. As the Gaussian distribution is determined by the mean vector and the covariance matrix, our next goal is to estimate these variables.


Based on \eqref{gradient gaussian distribution converge}, we know that the total gradient distribution is approximated by the sum of independent random vectors, i.e. $\overline{\boldsymbol{g}}=\boldsymbol{w}_{K}-\boldsymbol{w}_1=\sum_{k=1}^K \eta \boldsymbol{g}_k=\eta_L \mathbf{1}^T \boldsymbol{g}$, where $\eta_L$ is the local learning rate. Therefore, the corresponding parameters can be estimated as  $\boldsymbol{\mu}=\eta_L \sum_{k=1}^K \mathbb{E}\left[\boldsymbol{g}_k\right] \doteq K \eta_L \mathbb{E}\left[\boldsymbol{g}_1\right]$ and $\boldsymbol{\Sigma}=\eta_L^2 \sum_{k=1}^K \boldsymbol{\Sigma}_k \doteq K \eta_L^2 \boldsymbol{\Sigma}_1$. Here, index 1 represents a random user, and $E[\boldsymbol{g}_1]$ and $\boldsymbol{\Sigma}_1$ represent the average gradients and average variance of client 1. In particular, based on the relationship between the covariance matrix, correlation matrix, and the mean vector, we can obtain $\Sigma_k = \E[g_{k}g_k^T]-\E[g_k]\E[g_k^T]$, where $g_k=\frac{1}{B}\sum_{b=1}^B\nabla F(x,\xi_{k,b})$.

Furthermore, implied by Lemma~\ref{lemma gaussian}, the global gradient distribution converges to $\mathcal{N}(\boldsymbol{\mu}_g,\frac{\boldsymbol{\Sigma}_g}{B})$. Therefore, to minimize the distance between the aggregated model and the global model, we can formulate an optimization problem as follows:
\begin{equation}
    \begin{aligned}
        \label{optimization wrt p}
\min _{p_k} \  & D = \sum_{u=1}^{m}p_u^2\left( \left\|\eta \boldsymbol{\mu}_g - \eta_L K\boldsymbol{\mu}_u\right\|^2+\operatorname{tr}\left(\boldsymbol{M}^2\right) \right)\\
\text { s.t. } & \boldsymbol{M}=\left(\frac{\eta^2 \boldsymbol{\Sigma}_g}{B}\right)^{\frac{1}{2}}-\left(\frac{K^2 \eta_L^2 \boldsymbol{\Sigma}_u}{B}\right)^{\frac{1}{2}}, \sum_{u=1}^m p_u = 1, p_u \geq 0 \, .
    \end{aligned}
\end{equation}

\begin{proposition}
    The optimal server aggregation weights that minimize the distribution distance between the aggregated model and the ideal global model, using Wasserstein Distance, are given by:
    \begin{align}
        p_u^* = \frac{\frac{1}{\| K\eta_L\boldsymbol{\mu}_u - \eta \boldsymbol{\mu}_g\|^2+\operatorname{tr}\left(\boldsymbol{M}^2\right)}}{\sum_{u=1}^m \frac{1}{\|K\eta_L\boldsymbol{\mu}_u - \eta \boldsymbol{\mu}_g\|^2+\operatorname{tr}\left(\boldsymbol{M}^2\right)} }.
    \end{align}
\end{proposition}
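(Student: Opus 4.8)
The plan is to recognize \eqref{optimization wrt p} as a convex program with a single linear equality constraint and solve it by Lagrange duality, with a one-line Cauchy--Schwarz argument as an independent check. Throughout, write $a_u := \|K\eta_L\boldsymbol{\mu}_u - \eta \boldsymbol{\mu}_g\|^2 + \operatorname{tr}\!\left(\boldsymbol{M}^2\right)$ for the per-client coefficient in the objective (with $\boldsymbol{M}$ the $u$-dependent matrix in the constraint of \eqref{optimization wrt p}), so that $D(\boldsymbol{p}) = \sum_{u=1}^{m} a_u\, p_u^2$.

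First I would verify that each $a_u > 0$, which is what makes the claimed formula well defined. Since $\boldsymbol{\Sigma}_g$ and $\boldsymbol{\Sigma}_u$ are symmetric positive semidefinite, their principal square roots are symmetric, hence $\boldsymbol{M}$ is symmetric and $\operatorname{tr}(\boldsymbol{M}^2)=\|\boldsymbol{M}\|_F^2 \ge 0$; combined with $\|K\eta_L\boldsymbol{\mu}_u - \eta\boldsymbol{\mu}_g\|^2 \ge 0$ this gives $a_u \ge 0$, and under the mild non-degeneracy assumption that the two terms do not vanish simultaneously we get $a_u > 0$. Consequently $D$ is a strictly convex quadratic restricted to the probability simplex $\{\boldsymbol{p}: \sum_u p_u = 1,\ p_u \ge 0\}$, a compact convex set, so a unique minimizer exists and is exactly characterized by the KKT conditions.

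Next I would form the Lagrangian $\mathcal{L}(\boldsymbol{p},\nu,\boldsymbol{\beta}) = \sum_{u} a_u p_u^2 - \nu\big(\sum_u p_u - 1\big) - \sum_u \beta_u p_u$ with multipliers $\beta_u \ge 0$ for the nonnegativity constraints. Stationarity in $p_u$ gives $2 a_u p_u = \nu + \beta_u$. I would then guess (and verify) that the nonnegativity constraints are inactive: setting $\beta_u = 0$ yields $p_u = \nu/(2a_u)$, which is nonnegative whenever $\nu \ge 0$; substituting into $\sum_u p_u = 1$ forces $\nu = 2\big/\sum_{v} \tfrac{1}{a_v} > 0$, so indeed all $p_u \ge 0$ and complementary slackness holds trivially. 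Hence the candidate $p_u^\ast = \dfrac{1/a_u}{\sum_{v} 1/a_v}$ satisfies every KKT condition of a convex program and is therefore the global minimizer, which is precisely the stated expression.

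Finally, as a self-contained confirmation I would give the direct Cauchy--Schwarz proof: $1 = \big(\sum_u p_u\big)^2 = \big(\sum_u \sqrt{a_u}\, p_u \cdot \tfrac{1}{\sqrt{a_u}}\big)^2 \le \big(\sum_u a_u p_u^2\big)\big(\sum_u \tfrac{1}{a_u}\big) = D(\boldsymbol{p})\,\sum_u \tfrac{1}{a_u}$, so $D(\boldsymbol{p}) \ge \big(\sum_u \tfrac{1}{a_u}\big)^{-1}$ with equality iff $\sqrt{a_u}\,p_u$ is proportional to $1/\sqrt{a_u}$, i.e. $p_u \propto 1/a_u$; normalizing recovers $p_u^\ast$. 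There is no substantive obstacle in this proposition — the computation is routine — so the only points needing care are (i) establishing $a_u > 0$ so that the reciprocals make sense, and (ii) observing that the interior stationary point automatically lies in the simplex, so the inequality constraints never bind and plain Lagrange multipliers (rather than full KKT bookkeeping) suffice.
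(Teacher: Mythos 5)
Your main argument is the same as the paper's: both recognize \eqref{optimization wrt p} as a convex quadratic program over the simplex and solve it via the KKT conditions, arriving at $p_u^* \propto 1/a_u$ with $a_u = \|K\eta_L\boldsymbol{\mu}_u - \eta\boldsymbol{\mu}_g\|^2 + \operatorname{tr}(\boldsymbol{M}^2)$. The only difference in execution is how the nonnegativity constraints are dispatched: the paper argues by contradiction that no $p_u$ can be zero (deriving $\nu<0$ from some $p_{u_0}>0$ and showing $p_u=0$ would violate stationarity), whereas you simply exhibit the interior stationary point, check it lies in the simplex, and invoke sufficiency of KKT for convex problems --- a slightly cleaner route to the same place. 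Your closing Cauchy--Schwarz argument, $1=\bigl(\sum_u \sqrt{a_u}\,p_u\cdot\tfrac{1}{\sqrt{a_u}}\bigr)^2 \le D(\boldsymbol{p})\sum_u \tfrac{1}{a_u}$ with equality iff $p_u\propto 1/a_u$, is a genuinely more elementary, self-contained alternative that the paper does not give; it also makes explicit the optimal value $D(\boldsymbol{p}^*)=\bigl(\sum_u 1/a_u\bigr)^{-1}$ and sidesteps duality entirely. Your observation that one must assume $a_u>0$ for the reciprocals to be well defined is a point the paper leaves implicit.
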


\begin{proof}
It can be seen that \eqref{optimization wrt p} is a convex optimization problem,
which we use the Karush–Kuhn–Tucker (KKT) conditions to solve. Introducing Lagrange multipliers $\lambda_u \in \mathbb{R}$ for the inequality constraints $p_u\geq 0$, and a multiplier $\nu \in \mathbb{R}$ for the equality constraint $\sum_u p_u = 1$, we have
\begin{align}
\label{eq condition}
&\lambda_u \geq 0, \lambda_u p_u=0,  \sum_{u \in[m]} p_u=1, \quad p_u \geq 0 , \notag \\
 & 2 \left(\left\|\eta \boldsymbol{\mu}_g - \eta_L K\boldsymbol{\mu}_u\right\|^2+\operatorname{tr}\left(\boldsymbol{M}^2\right) \right) p_u-\lambda_u+\nu=0.
\end{align}
Since $\sum_{u \in[m]} p_u=1$, there exists $u_0$ such that $p_{u_0}>0$. Thus we have $\lambda_{u_0}=0$, which yields $\nu=-2\left(\left\|\eta \boldsymbol{\mu}_g - \eta_L K\boldsymbol{\mu}_u\right\|^2+\operatorname{tr}\left(\boldsymbol{M}^2\right)\right) p_{u_0}<0$. Therefore, $p_u>0$ always holds because if $p_u=0$, it leads to $2\left(\left\|\eta \boldsymbol{\mu}_g - \eta_L K\boldsymbol{\mu}_u\right\|^2+\operatorname{tr}\left(\boldsymbol{M}^2\right)\right) p_u-\lambda_u+\nu<0$ which violates the condition in \eqref{eq condition}. As a result, we have $\lambda_u=0, \forall u \in[m]$. Furthermore,
\begin{align}
\label{mid term 1}
    p_u=-\frac{\nu}{2\left(\left\|\eta \boldsymbol{\mu}_g - \eta_L K\boldsymbol{\mu}_u\right\|^2+\operatorname{tr}\left(\boldsymbol{M}^2\right)\right)}, \quad \forall u \in[m] .
\end{align}
By plugging \eqref{mid term 1} into $\sum_{u \in[m]} p_u=1$, we have
\begin{align}
\label{mid term 2}
    \nu=-\frac{2}{\sum_u 1 /\left(\left\|\eta \boldsymbol{\mu}_g - \eta K\boldsymbol{\mu}_u\right\|^2+\operatorname{tr}\left(\boldsymbol{M}^2\right)\right)} .
\end{align}
Plugging \eqref{mid term 2} back into \eqref{mid term 1} completes the proof.
\end{proof}

However, considering the increased communication traffic and computational complexity introduced by the covariance matrix, we need to simplify the procedure. Note that
\begin{align}
\label{simplied aggregation probability}
    \lim _{B \rightarrow+\infty} p^{*}_u=\frac{\frac{1}{\|K\eta_L\boldsymbol{\mu}_u - \eta \boldsymbol{\mu}_g\|^2}}{\sum_u \frac{1}{\|K\eta_L\boldsymbol{\mu}_u - \eta \boldsymbol{\mu}_g\|^2}} \, .
\end{align}
Hence, with a sufficiently large batch size, we can use \eqref{simplied aggregation probability} to estimate the optimal aggregation probability. Thus, we have derived an optimal aggregation strategy using the Wasserstein distance to address the challenge of heterogeneity in FRS.



\section{Theoretical analysis}
\label{sec analysis}
In this section, to ease the theoretical analysis, we redefine some notations: the parameter of the model is $\boldsymbol{x}$ instead of $\boldsymbol{U}$ and $\boldsymbol{V}$, and use index $i\in [1,\cdots,m],k\in [1,K],t\in [0,T]$ to indicate user, local epoch, and communication round. The optimization objective of FRS is formulated as follows:
\begin{align}
    \min_{\boldsymbol{x}\in \mathbb{R}^d} f(\boldsymbol{x}) = \E_{i \sim \mathcal{P}}\left[F_i(\boldsymbol{x})\right] \, ,
\end{align}
where $F_i(\boldsymbol{x}) \triangleq \E_{\xi \sim P_i}[F_i(\boldsymbol{x},\xi)]$. Here, $\mathcal{P}$ represents the overall data distribution of entire client distribution, $\boldsymbol{x} \in \mathbb{R}^d$ is the model parameter, $F_i(\boldsymbol{x})$ represents the local loss function at client $i$ and $P_i$ is the underlying distribution of local dataset at client $i$. In general, $P_i \neq P_j$ if $i \neq j$ due to data heterogeneity. However, the loss function $F(\boldsymbol{x})$ or full gradient $\nabla F(\boldsymbol{x})$ can not be directly computed as the exact distribution of data is unknown in general. Hence, one often consider the following empirical risk minimization (ERM) problem in the form of finite-sum instead:
\begin{align}
    \min_{\boldsymbol{x}\in \mathbb{R}^d} f(\boldsymbol{x}) = \sum_{i\in S_t} p_i F_i(\boldsymbol{x}) \, ,    
\end{align}
where $F_i(\boldsymbol{x})=\frac{1}{|D_i|}\sum_{\xi \in D_i}F_i(\boldsymbol{x}, \xi) $. Here, $S_t$ is the selected client set in each round and $p_i$ is the aggregation weights of clients.

To ease the theoretical analysis of our work, we use the following widely used assumptions:

\begin{assumption}[L-Smooth]
\label{Assumption 1}
There exists a constant $L>0$, such that 
$\norm{\nabla F_i(x)-\nabla F_i(y)} \leq L \norm{x-y},\forall x,y \in \mathbb{R}^d$, and $i = {1,2,\ldots,m}$.
\end{assumption}

\begin{assumption}[Unbiased Local Gradient Estimator and Local Variance]
\label{Assumprion 2}
	Let $\xi_t^i$ be a random local data sample in the round $t$ at client $i$: $\Eb{\nabla F_i(\boldsymbol{x}_t,\xi_t^i)}=\nabla F_i(\boldsymbol{x}_t), \forall i \in [m]$. There exists a constant bound $\sigma_{L}>0$, satisfying $\Eb{ \norm{ \nabla F_i(\boldsymbol{x}_t,\xi_{t}^i)-\nabla F_i(\boldsymbol{x}_t) }^2 }\leq \sigma_L^2$.
\end{assumption}

\begin{assumption}[Bound Gradient Dissimilarity] 
\label{Assumption 3}
For any set of weights $\left\{w_i \geq 0\right\}_{i=1}^m$ with $\sum_{i=1}^m w_i=1$, there exist constants $\sigma_G^2 \geq 0$ and $A\geq0$ such that $\sum_{i=1}^m w_i\left\|\nabla F_i(\boldsymbol{x})\right\|^2 \leq (A^2+1)\left\|\sum_{i=1}^m w_i \nabla F_i(\boldsymbol{x})\right\|^2 + \sigma_G^2$.
\end{assumption}
The above three assumptions are commonly used in both non-convex optimization and FL literature, see e.g.~\cite{karimireddy2020scaffold,yang2021achieving}. 
For Assumption~\ref{Assumption 3}, if all local loss functions are identical, then we have $A=0$ and $\sigma_G = 0$.    

Since there are both rated items and pseudo items, $\nabla F_i(\boldsymbol{x}_{t,k}^i) = \frac{1}{B}\sum_{b \in B}\nabla F(\boldsymbol{x}_{t,k}^i, \xi_{b}) = \alpha  \frac{1}{B^r}\sum_{b^r \in B^r}\nabla F(\boldsymbol{x}_{t,k}^i, \xi_{b^r}) + (1-\alpha)  \frac{1}{B^u}\sum_{b^u \in B^u}\nabla F(\boldsymbol{x}_{t,k}^i, \xi_{b^u}) = \alpha \nabla \overline{F}(\boldsymbol{x}_{t,k}^i) + (1-\alpha) \tilde{F}(\boldsymbol{x}_{t,k}^i)$, where $B^r$ and $B^u$ represent the rated items and unrated items, respectively. $B=B^r\cup B^u$ represents total items. $\alpha = \frac{B^r}{B}$ is the relative ratio of rated items in all user's items.
\begin{assumption}[Gradient Difference Bound]
\label{assumption 4}
In each round, we assume that the gradient of the pseudo item is denoted as $\nabla \tilde{F}(\boldsymbol{x}_{t,k}^i)$, while its true gradient is denoted as $\nabla \overline{F} (\boldsymbol{x}_{t,k}^i)$. The gap of the approximation satisfies the following conditions:
$\E\| \nabla \overline{F}(\boldsymbol{x}_{t,k}^i) -\nabla \tilde{F}(\boldsymbol{x}_{t,k}^i)\|^2 \leq \rho^2$, $\forall i,t,k$.
\end{assumption}

\begin{theorem}[Convergence rate]
\label{convergence theorem}
Under Assumption~\ref{Assumption 1}-~\ref{assumption 4}, and let constant local and global learning rate $\eta_L$ and $\eta$ be chosen such that $\eta_L < min\left(1/(8LK), C\right)$, where $C$ is obtained from the condition that $\frac{1}{2}-10 L^2\frac{1}{m}\sum_{i-1}^m K^2\eta_L^2(A^2+1)(\chi_{p\|w}^2A^2+1) \textgreater C \textgreater 0$, and $\eta \leq 1/(\eta_LL)$. The expected gradient norm of  FedRec+ is bounded as follows: 
    \begin{align}
        \mathop{min} \limits_{t\in[T]} \E\|\nabla f(\boldsymbol{x}_t)\|^2 \leq 2 \left[\chi_{\boldsymbol{w} \| \boldsymbol{p}}^2A^2+1\right] (\frac{f_0-f_*}{c\eta\eta_LK T}+\Phi) \notag\\
        +2 \chi_{\boldsymbol{w} \| \boldsymbol{p}}^2 \sigma_G^2 \, ,
    \end{align}
where $f_0=f(x_0)$, $f_* = f(x_*)$, and 
\begin{equation}
    \begin{aligned}
        &\Phi = \frac{1}{c} \left[
     \frac{5\eta_L^2KL^2}{2}(\sigma_L^2 +6K\sigma_G^2+6K(1-\alpha)^2\rho^2) \right. \notag\\
     & \left.+ \frac{\eta\eta_L L}{2}(\sigma_L^2+(1-\alpha)^2\rho^2) +20L^2K^2(A^2+1)\eta_L^2\chi_{\boldsymbol{w} \| \boldsymbol{p}}^2\sigma_G^2\right] \, .
    \end{aligned}
\end{equation}
where $\chi_{\boldsymbol{w} \| \boldsymbol{p}}^2=\sum_{i=1}^m\left(w_i-p_i\right)^2 / p_i$ represents the chi-square divergence between vectors $\boldsymbol{w}=\left[\frac{1}{m}, \ldots, \frac{1}{m}\right]$ and $ \boldsymbol{p}=\left[p_1, \ldots, p_m\right]$.  Observe that when all clients have uniform data distribution, we have $\boldsymbol{p} = \boldsymbol{w}$ such that $\chi^2 =0$.
\end{theorem}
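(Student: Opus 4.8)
The plan is to follow the now-standard two-stage template for analyzing federated methods with partial participation and heterogeneous aggregation weights (as in SCAFFOLD \cite{karimireddy2020scaffold} and \cite{yang2021achieving}), adapted to account for the pseudo-item perturbation captured by Assumption~\ref{assumption 4}. First I would invoke $L$-smoothness (Assumption~\ref{Assumption 1}) on the global objective to write the per-round descent inequality $\E[f(\boldsymbol{x}_{t+1})] \leq \E[f(\boldsymbol{x}_t)] - \eta\eta_L K \langle \nabla f(\boldsymbol{x}_t), \text{aggregated direction}\rangle + \tfrac{L}{2}\eta^2\eta_L^2 \E\|\text{aggregated update}\|^2$, then decompose the global update $\Delta_t = \sum_u p_u \Delta_t^u$ into its mean (a weighted sum of local full gradients along the trajectory) plus zero-mean stochastic noise. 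The noise term splits into the usual minibatch variance $\sigma_L^2$ (Assumption~\ref{Assumprion 2}) and the pseudo-item bias: using $\nabla F_i = \alpha\nabla\overline{F}_i + (1-\alpha)\nabla\tilde F_i$ together with Assumption~\ref{assumption 4}, the extra variance contributed by replacing true unrated-item gradients with virtual ones is controlled by $(1-\alpha)^2\rho^2$, which is exactly where the $(1-\alpha)^2\rho^2$ terms in $\Phi$ come from.

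Second I would bound the client-drift term $\frac{1}{m}\sum_i \E\|\boldsymbol{x}_{t,k}^i - \boldsymbol{x}_t\|^2$ arising from $K$ local steps. Unrolling the local recursion and applying Assumptions~\ref{Assumprion 2}--\ref{assumption 4} plus Jensen, this drift is bounded by a geometric-type sum that, under the stepsize restriction $\eta_L < 1/(8LK)$, collapses to $O(\eta_L^2 K^2)$ times $(\sigma_L^2 + K\sigma_G^2 + K(1-\alpha)^2\rho^2 + K(A^2+1)\|\nabla f(\boldsymbol{x}_t)\|^2)$; the gradient-dissimilarity Assumption~\ref{Assumption 3}, applied with the weights $w_i = 1/m$, is what introduces the $(A^2+1)$ and $\sigma_G^2$ factors, and the mismatch between the uniform weights $\boldsymbol{w}$ and the aggregation weights $\boldsymbol{p}=(p_1,\dots,p_m)$ is exactly what produces the chi-square divergence factor $\chi_{\boldsymbol{w}\|\boldsymbol{p}}^2$ (via the identity $\sum_i p_i\|v_i - \bar v\|^2 \le \sum_i (w_i^2/p_i)\|v_i\|^2$-type manipulations, or equivalently the variance-of-importance-weighting bound). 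The stepsize condition on $C$ is precisely what guarantees that, after moving the $\|\nabla f(\boldsymbol{x}_t)\|^2$ term coming from the drift back to the left-hand side, its coefficient stays positive.

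Third, I would combine the descent inequality with the drift bound, so that the $-\eta\eta_L K \|\nabla f(\boldsymbol{x}_t)\|^2$ term dominates, telescope over $t = 0,\dots,T-1$, divide by $\eta\eta_L K T$, and use $f(\boldsymbol{x}_T) \geq f_*$ to obtain $\min_{t} \E\|\nabla f(\boldsymbol{x}_t)\|^2 \le O\big(\tfrac{f_0 - f_*}{\eta\eta_L K T}\big) + O(\Phi) + O(\chi_{\boldsymbol{w}\|\boldsymbol{p}}^2\sigma_G^2)$, then track the constants carefully to match the stated $[\chi_{\boldsymbol{w}\|\boldsymbol{p}}^2 A^2 + 1]$ prefactor and the exact form of $\Phi$. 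Choosing $\eta\eta_L = \Theta(1/\sqrt{T})$ and $\eta_L = \Theta(1/T)$ (consistent with the stated constraints) then yields the advertised $\mathcal{O}(1/\sqrt{T} + 1/T)$ rate.

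\textbf{Main obstacle.} The routine parts are the smoothness expansion and the telescoping; the delicate part is the drift-bound bookkeeping — keeping the pseudo-item bias term $(1-\alpha)^2\rho^2$ separate from the genuine stochastic variance through every application of Young's / Jensen's inequality, and threading the chi-square divergence $\chi_{\boldsymbol{w}\|\boldsymbol{p}}^2$ through both the aggregation-noise term and the drift term so that the final coefficients exactly reproduce $\Phi$ and the $2[\chi_{\boldsymbol{w}\|\boldsymbol{p}}^2 A^2 + 1]$ factor. I also expect some care is needed to justify that the condition defining $C$ (namely $\tfrac12 - 10L^2\tfrac{1}{m}\sum_i K^2\eta_L^2(A^2+1)(\chi_{\boldsymbol{w}\|\boldsymbol{p}}^2 A^2 + 1) > C > 0$) is simultaneously compatible with $\eta_L < 1/(8LK)$, i.e. that the feasible stepsize range is nonempty, which is a constraint-consistency check rather than a deep step.
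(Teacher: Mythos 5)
Your overall template (smoothness descent, drift bound with the $(1-\alpha)^2\rho^2$ pseudo-item term folded in, telescoping) matches the paper's Lemma~\ref{Our local update bound} and the optimization part of the argument. But there is a concrete gap in how you plan to obtain the chi-square factors, and it is the one structural idea the paper's proof actually hinges on. The algorithm aggregates with weights $p_u$, so the mean update direction is aligned with the \emph{surrogate} objective $\nabla \tilde{f}(\boldsymbol{x}_t)=\sum_i p_i \nabla F_i(\boldsymbol{x}_t)$, not with $\nabla f(\boldsymbol{x}_t)=\frac{1}{m}\sum_i \nabla F_i(\boldsymbol{x}_t)$. If you run the descent inequality on $f$ directly, as you propose, the first-order term is the cross term $-\eta\eta_L K\langle \nabla f(\boldsymbol{x}_t), \nabla \tilde{f}(\boldsymbol{x}_t)\rangle$, which is not sign-definite and cannot be absorbed by "threading $\chi^2_{\boldsymbol{w}\|\boldsymbol{p}}$ through the drift term." The paper instead (i) runs the entire descent-plus-drift analysis on $\tilde{f}$, for which the $p$-weighted update is the natural gradient estimator, obtaining $\frac{1}{T}\sum_{t}\|\nabla \tilde{f}(\boldsymbol{x}_t)\|^2 \leq \frac{f_0-f_*}{c\eta\eta_L KT}+\Phi$, and (ii) converts back to $f$ pointwise via Lemma~\ref{objective gap}, which uses Cauchy--Schwarz with the splitting $w_i-p_i=\frac{w_i-p_i}{\sqrt{p_i}}\cdot\sqrt{p_i}$ and Assumption~\ref{Assumption 3} to get $\|\nabla f(\boldsymbol{x})\|^2 \leq 2[\chi_{\boldsymbol{w}\|\boldsymbol{p}}^2 A^2+1]\|\nabla\tilde{f}(\boldsymbol{x})\|^2 + 2\chi_{\boldsymbol{w}\|\boldsymbol{p}}^2\sigma_G^2$. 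This two-step structure is what produces the \emph{multiplicative} prefactor $2[\chi_{\boldsymbol{w}\|\boldsymbol{p}}^2A^2+1]$ sitting in front of the entire optimization error (including $\frac{f_0-f_*}{c\eta\eta_L KT}$) and the separate non-vanishing additive term $2\chi_{\boldsymbol{w}\|\boldsymbol{p}}^2\sigma_G^2$; a direct analysis of $f$ would at best yield these as additive corrections of a different form, so your plan as written cannot reproduce the stated bound.

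A secondary, smaller issue: your closing step-size choice $\eta_L=\Theta(1/T)$ is not consistent with the paper's corollary, which takes $\eta_L=\mathcal{O}\bigl(1/(\sqrt{T}KL)\bigr)$ and $\eta=\mathcal{O}(\sqrt{Km})$ to obtain the $\mathcal{O}(1/\sqrt{T}+1/T)$ rate; with $\eta_L=\Theta(1/T)$ the term $\frac{\eta\eta_L L}{2}(\sigma_L^2+(1-\alpha)^2\rho^2)$ in $\Phi$ would not decay at the advertised $1/\sqrt{KT}$ rate. Your constraint-consistency concern about $C$ is reasonable but is indeed only a feasibility check, not a gap in the argument.
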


\begin{corollary}
    Suppose $\eta_L$ and $\eta$ are $\eta_L=\mathcal{O}\left(\frac{1}{\sqrt{T}KL}\right)$ and $\eta=\mathcal{O}\left(\sqrt{Km}\right)$ such that the conditions mentioned above are satisfied. Then for sufficiently large T, the iterates of FedRec+ satisfy:
    \begin{align}
        &\min _{t \in[T]}\left\|\nabla f\left(\boldsymbol{x}_t\right)\right\|^2 
        \leq \mathcal{O}\left(\frac{(f^0-f^*)}{\sqrt{mKT}} \right) \notag \\
        &+\mathcal{O}\left(\frac{\sqrt{m}(\sigma_L^2+(1-\alpha)^2\rho^2)}{2\sqrt{KT}} \right) + \mathcal{O}\left(\frac{20(A^2+1)\chi_{\boldsymbol{w} \| \boldsymbol{p}}^2\sigma_G^2}{T} \right)
        \notag \\
       & + \mathcal{O}\left(\frac{5(\sigma_L^2+6K\sigma_G^2+6K(1-\alpha)^2\rho^2)}{2KT} \right) 
         + 2 \chi_{\boldsymbol{w} \| \boldsymbol{p}}^2 \sigma_G^2 \, .
    \end{align}
\end{corollary}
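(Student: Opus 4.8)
\emph{Proof proposal.} The corollary is a direct specialization of Theorem~\ref{convergence theorem}: the plan is to substitute the prescribed step sizes $\eta_L=\mathcal{O}(1/(\sqrt{T}KL))$ and $\eta=\mathcal{O}(\sqrt{Km})$ into the bound and simplify each term, treating the denominator constant $c$ and the prefactor $2[\chi_{\boldsymbol{w}\|\boldsymbol{p}}^2A^2+1]$ as $\Theta(1)$ quantities that are absorbed into the $\mathcal{O}(\cdot)$.

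First I would check admissibility of these step sizes. Since $\eta_L=\mathcal{O}(1/(\sqrt{T}KL))\to 0$ as $T\to\infty$, the requirement $\eta_L<1/(8LK)$ holds for all sufficiently large $T$; moreover the quantity $10L^2\frac{1}{m}\sum_{i=1}^m K^2\eta_L^2(A^2+1)(\chi_{\boldsymbol{w}\|\boldsymbol{p}}^2A^2+1)$ subtracted from $1/2$ in the defining condition for $C$ is $\mathcal{O}(1/T)$, so one may fix $C$ (hence $c$) bounded away from $0$ while still having $\eta_L<C$ for large $T$; thus $1/c=\Theta(1)$. Finally $1/(\eta_LL)=\Theta(\sqrt{T}K)$ whereas $\eta=\mathcal{O}(\sqrt{Km})$, and $\sqrt{Km}\le\sqrt{T}K$ whenever $m\le KT$, which again holds for sufficiently large $T$, so $\eta\le 1/(\eta_LL)$ is satisfied.

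Next I would substitute term by term. For the optimization-error term, $\eta\eta_LKT=\Theta\!\big(\sqrt{Km}\cdot\tfrac{1}{\sqrt{T}KL}\cdot KT\big)=\Theta(\sqrt{mKT}/L)$, so $\frac{f_0-f_*}{c\eta\eta_LKT}=\mathcal{O}\big((f^0-f^*)/\sqrt{mKT}\big)$. Inside $\Phi$: the term $\tfrac{5\eta_L^2KL^2}{2}(\sigma_L^2+6K\sigma_G^2+6K(1-\alpha)^2\rho^2)$ uses $\eta_L^2KL^2=\Theta(1/(KT))$, giving $\mathcal{O}\big((\sigma_L^2+6K\sigma_G^2+6K(1-\alpha)^2\rho^2)/(KT)\big)$; the term $\tfrac{\eta\eta_LL}{2}(\sigma_L^2+(1-\alpha)^2\rho^2)$ uses $\eta\eta_LL=\Theta(\sqrt{m}/\sqrt{KT})$, giving $\mathcal{O}\big(\sqrt{m}(\sigma_L^2+(1-\alpha)^2\rho^2)/\sqrt{KT}\big)$; and the term $20L^2K^2(A^2+1)\eta_L^2\chi_{\boldsymbol{w}\|\boldsymbol{p}}^2\sigma_G^2$ uses $L^2K^2\eta_L^2=\Theta(1/T)$, giving $\mathcal{O}\big((A^2+1)\chi_{\boldsymbol{w}\|\boldsymbol{p}}^2\sigma_G^2/T\big)$. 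Collecting these four contributions, multiplying through by the $\Theta(1)$ prefactor, and carrying over the non-vanishing term $2\chi_{\boldsymbol{w}\|\boldsymbol{p}}^2\sigma_G^2$ from the theorem yields exactly the claimed bound.

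There is no genuine obstacle here; the argument is pure bookkeeping. The only points that require care are (i) confirming that the constant $c$ inherited from the proof of Theorem~\ref{convergence theorem} stays bounded below by a positive absolute constant under the stated rates, so that $1/c$ may be hidden inside $\mathcal{O}(\cdot)$, and (ii) keeping track of which explicit factors ($A^2+1$, $\chi_{\boldsymbol{w}\|\boldsymbol{p}}^2$, and the numerical constants $5,20$) one retains versus absorbs in the final display. It is worth remarking that the limiting behavior is governed by the irreducible term $2\chi_{\boldsymbol{w}\|\boldsymbol{p}}^2\sigma_G^2$: when the aggregation weights coincide with the uniform weights ($\boldsymbol{p}=\boldsymbol{w}$) we have $\chi_{\boldsymbol{w}\|\boldsymbol{p}}^2=0$, all surviving terms vanish, and the bound reduces to the standard $\mathcal{O}(1/\sqrt{mKT})$ rate.
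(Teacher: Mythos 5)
Your proposal is correct and matches what the paper does (the corollary is stated without an explicit proof precisely because it is the direct substitution of the given step sizes into Theorem~\ref{convergence theorem}, exactly as you carry out). Your term-by-term bookkeeping, including the admissibility checks $\eta_L<1/(8LK)$, $\eta\le 1/(\eta_L L)$, and $1/c=\Theta(1)$ for large $T$, is accurate and reproduces each displayed term.
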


\begin{remark}[Effects of pseudo items and reweight aggregation]
   The noise error introduced by pseudo items is denoted by $\rho$. It is observed that a larger value of $1-\alpha$ corresponds to a larger noise, implying that using more pseudo items leads to increased noise.
    The non-vanishing term $\chi_{\boldsymbol{w} | \boldsymbol{p}}^2 \sigma_G^2$ represents the aggregation error arising from an unbiased aggregation distribution. In other words, there is always an error term present in the convergence rate as long as the aggregation algorithm exhibits bias.
\end{remark}

\emph{Proof Sketch:}
Using the following two lemmas, we can finish the proof. In particular, based on the Lemma~\ref{objective gap}, we bound the gradient norm of $\nabla f(\boldsymbol{x}_t)$ by the norm of $\nabla \tilde{f}(\boldsymbol{x}_t)$, and then utilize Lemma~\ref{Our local update bound} we derive the upper bound for $\nabla \tilde{f}(\boldsymbol{x}_t)$.

\begin{lemma}[Gradient distance between pseudo items and rated items.]
For any model parameter $\boldsymbol{x}$, the difference between the gradients of $f(\boldsymbol{x})$ and $\tilde{f}(\boldsymbol{x})$ can be bounded as follows:
\begin{align}
    \|\nabla f(\boldsymbol{x})-\nabla \tilde{f}(\boldsymbol{x})\|^2 \leq \chi_{\boldsymbol{w} \| \boldsymbol{p}}^2\left[A^2\|\nabla \tilde{f}(\boldsymbol{x})\|^2+\kappa^2\right] \, ,
\end{align}
$f(x)$ is the true objective with $f(x)=\sum_{i=1}^mw_if_i(x)$ where $\boldsymbol{w}$ is usually average of all clients, i.e., $\boldsymbol{w} = \frac{1}{m}$. $\tilde{f}(x)=\sum_{i=1}^{m}p_if_i(x)$ is the surrogate objective with the reweight aggregation probability $\boldsymbol{p}$.
\label{objective gap}
\end{lemma}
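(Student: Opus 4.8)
The plan is to bound $\|\nabla f(\boldsymbol{x}) - \nabla \tilde{f}(\boldsymbol{x})\|^2$ by rewriting the difference as a single weighted sum of the per-client gradients and then applying a Cauchy--Schwarz / Jensen-type argument with weights $p_i$, which is the standard device for producing chi-square divergences. First I would write
$\nabla f(\boldsymbol{x}) - \nabla \tilde{f}(\boldsymbol{x}) = \sum_{i=1}^m (w_i - p_i)\nabla f_i(\boldsymbol{x}) = \sum_{i=1}^m \sqrt{p_i}\cdot\frac{w_i - p_i}{\sqrt{p_i}}\,\nabla f_i(\boldsymbol{x})$,
so that by Cauchy--Schwarz (or equivalently Jensen applied to the convex map $z \mapsto \|z\|^2$ with the probability weights $p_i$),
\[
\|\nabla f(\boldsymbol{x}) - \nabla \tilde{f}(\boldsymbol{x})\|^2 \;\le\; \Bigl(\sum_{i=1}^m \frac{(w_i - p_i)^2}{p_i}\Bigr)\Bigl(\sum_{i=1}^m p_i\,\|\nabla f_i(\boldsymbol{x})\|^2\Bigr) \;=\; \chi_{\boldsymbol{w}\|\boldsymbol{p}}^2 \sum_{i=1}^m p_i\,\|\nabla f_i(\boldsymbol{x})\|^2 .
\]
This isolates the chi-square factor cleanly and reduces the problem to controlling $\sum_i p_i\|\nabla f_i(\boldsymbol{x})\|^2$.

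Next I would invoke Assumption~\ref{Assumption 3} (bounded gradient dissimilarity), applied with the weight vector $\boldsymbol{p}$ itself: $\sum_{i=1}^m p_i\|\nabla F_i(\boldsymbol{x})\|^2 \le (A^2+1)\,\|\sum_{i=1}^m p_i\nabla F_i(\boldsymbol{x})\|^2 + \sigma_G^2$. Since $\tilde{f}(\boldsymbol{x}) = \sum_i p_i f_i(\boldsymbol{x})$, the term $\sum_i p_i \nabla F_i(\boldsymbol{x})$ is exactly $\nabla \tilde{f}(\boldsymbol{x})$, so this reads $\sum_i p_i\|\nabla f_i(\boldsymbol{x})\|^2 \le (A^2+1)\|\nabla\tilde{f}(\boldsymbol{x})\|^2 + \sigma_G^2$. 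Substituting into the previous display gives
\[
\|\nabla f(\boldsymbol{x}) - \nabla \tilde{f}(\boldsymbol{x})\|^2 \;\le\; \chi_{\boldsymbol{w}\|\boldsymbol{p}}^2\bigl[(A^2+1)\|\nabla\tilde{f}(\boldsymbol{x})\|^2 + \sigma_G^2\bigr],
\]
which matches the claimed form with $\kappa^2 = \sigma_G^2$ (up to the minor discrepancy between $A^2$ and $A^2+1$ in the stated lemma, which I would reconcile by either keeping the extra term or noting that the looser bound with $A^2+1$ is what is actually used downstream; in fact one can absorb the $\|\nabla\tilde{f}(\boldsymbol{x})\|^2$ piece differently if one wants exactly $A^2$ as the coefficient — but the cleanest route is to allow $\kappa^2$ to collect the residual).

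The main obstacle is essentially bookkeeping rather than a genuine difficulty: one must be careful that the weight vector $\boldsymbol{p}$ used inside Assumption~\ref{Assumption 3} has nonnegative entries summing to one (which holds by construction of the aggregation weights), and one must be careful about whether $\sigma_G^2$ or $(A^2+1)$ should absorb the slack to land exactly on the stated inequality. A secondary subtlety is that $p_i > 0$ is needed for the chi-square divergence $\sum_i (w_i-p_i)^2/p_i$ to be finite; this is guaranteed by the Proposition above, which shows $p_u^* > 0$ for all $u$, so the division step is legitimate. Beyond these points the argument is a two-line Cauchy--Schwarz followed by one invocation of an assumption.
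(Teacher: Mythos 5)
Your overall strategy --- rewrite the difference as a weighted sum, apply Cauchy--Schwarz with weights $p_i$ to extract $\chi_{\boldsymbol{w}\|\boldsymbol{p}}^2$, then invoke Assumption~\ref{Assumption 3} with the weight vector $\boldsymbol{p}$ --- is exactly the paper's, and your identification of $\kappa^2=\sigma_G^2$ is correct. But there is one genuine gap: your uncentered decomposition $\sum_i (w_i-p_i)\nabla f_i(\boldsymbol{x})$ leads, after Cauchy--Schwarz and Assumption~\ref{Assumption 3}, to the coefficient $A^2+1$ on $\|\nabla\tilde f(\boldsymbol{x})\|^2$, whereas the lemma claims $A^2$. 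None of your proposed reconciliations actually closes this: the surplus term $\chi_{\boldsymbol{w}\|\boldsymbol{p}}^2\|\nabla\tilde f(\boldsymbol{x})\|^2$ depends on $\boldsymbol{x}$ and so cannot be absorbed into the constant $\kappa^2$, and ``just use the looser bound downstream'' changes the constants in Theorem~\ref{convergence theorem}, i.e., proves a different statement than the one asserted.

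The missing idea is a centering step. Since $\sum_{i=1}^m (w_i-p_i)=0$, you may subtract $\nabla\tilde f(\boldsymbol{x})$ from each gradient for free:
\begin{equation*}
\nabla f(\boldsymbol{x})-\nabla\tilde f(\boldsymbol{x})=\sum_{i=1}^m \frac{w_i-p_i}{\sqrt{p_i}}\cdot\sqrt{p_i}\left(\nabla f_i(\boldsymbol{x})-\nabla\tilde f(\boldsymbol{x})\right),
\end{equation*}
which is precisely the decomposition the paper uses. Cauchy--Schwarz then produces the factor $\sum_i p_i\|\nabla f_i(\boldsymbol{x})-\nabla\tilde f(\boldsymbol{x})\|^2$, and because $\sum_i p_i\nabla f_i(\boldsymbol{x})=\nabla\tilde f(\boldsymbol{x})$ this is a variance: it equals $\sum_i p_i\|\nabla f_i(\boldsymbol{x})\|^2-\|\nabla\tilde f(\boldsymbol{x})\|^2\le (A^2+1)\|\nabla\tilde f(\boldsymbol{x})\|^2+\sigma_G^2-\|\nabla\tilde f(\boldsymbol{x})\|^2=A^2\|\nabla\tilde f(\boldsymbol{x})\|^2+\sigma_G^2$, which shaves off exactly the extra $\|\nabla\tilde f(\boldsymbol{x})\|^2$ and lands on the stated bound. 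Your remark that $p_i>0$ is needed for the chi-square factor to be finite is a fair point that the paper leaves implicit.
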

    
\begin{proof}
\begin{equation}
\begin{aligned}
 \nabla f(\boldsymbol{x})-\nabla \tilde{f}(\boldsymbol{x}) 
 =\sum_{i=1}^m \frac{w_i-p_i}{\sqrt{p}_i} \cdot \sqrt{p}_i\left(\nabla f_i(\boldsymbol{x})-\nabla \tilde{f}(\boldsymbol{x})\right) \, .
\end{aligned}
\end{equation}

Applying Cauchy-Schwarz inequality, it follows that
\begin{equation}
\begin{aligned}
&\|\nabla f(\boldsymbol{x})-\nabla \tilde{f}(\boldsymbol{x})\|^2   \\ 
&  \leq \left[\sum_{i=1}^m \frac{\left(w_i-p_i\right)^2}{p_i}\right]\left[\sum_{i=1}^m p_i\left\|\nabla F_i(x)-\nabla \tilde{f}(\boldsymbol{x})\right\|^2\right] \\
 & \leq \chi_{\boldsymbol{w} \| \boldsymbol{p}}^2\left[A^2\|\nabla \tilde{f}(\boldsymbol{x})\|^2+\sigma_G^2\right] \, ,
\end{aligned}
\end{equation}
where the last inequality uses Assumption~\ref{Assumption 3}.
Note that
\begin{equation}
\begin{aligned}
&\|\nabla f(\boldsymbol{x})\|^2  \leq 2\|\nabla f(\boldsymbol{x})-\nabla \tilde{f}(\boldsymbol{x})\|^2+2\|\nabla \tilde{f}(\boldsymbol{x})\|^2 \\
& \leq 2\left[\chi_{\boldsymbol{w} \| \boldsymbol{p}}^2A^2+1\right]\|\nabla \tilde{f}(\boldsymbol{x})\|^2+2 \chi_{\boldsymbol{p} \| \boldsymbol{w}^2}^2 \sigma_G^2 \, .
\end{aligned}
\end{equation}

As a result, we obtain:
\begin{align}
\min _{t \in[T]} &\left\|\nabla f\left(\boldsymbol{x}_t\right)\right\|^2  \leq \frac{1}{\tau} \sum_{t=0}^{T-1}\left\|\nabla f\left(\boldsymbol{x}_t\right)\right\|^2 \notag \\
& \leq 2\left[\chi_{\boldsymbol{w} \| \boldsymbol{p}}^2A^2+1\right] \frac{1}{\tau} \sum_{t=0}^{T-1}\left\|\nabla \tilde{f}\left(\boldsymbol{x}_t\right)\right\|^2+2 \chi_{\boldsymbol{w} \| \boldsymbol{p}}^2 \sigma_G^2 \notag\\
& \leq 2\left[\chi_{\boldsymbol{w} \| \boldsymbol{p}}^2A^2+1\right] \epsilon_{\mathrm{opt}}+2 \chi_{\boldsymbol{w} \| \boldsymbol{p}}^2 \sigma_G^2 \, ,
\label{global convergence}
\end{align}
where $\epsilon_{\mathrm{opt}}$ denotes the optimization error.
\end{proof}

\begin{lemma}[Local updates bound.] 
\label{Our local update bound}
For any step-size satisfying $\eta_L \leq \frac{1}{8LK}$, we can have the following results:
\begin{equation}
    \begin{aligned}
        \E\|x_{t,k}^i-x_t\|^2 \leq 5K\left(\eta_L^2\sigma_L^2+6K\eta_L^2(1-\alpha)^2\rho^2+6K\eta_L^2\sigma_G^2\right) \notag \\
        + 30K^2(A^2+1)\eta_L^2\|\nabla f(x_t)\|^2 \, .
    \end{aligned}
\end{equation}
\end{lemma}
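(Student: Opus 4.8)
The plan is to unroll the local SGD recursion and control the drift of the local iterate $x_{t,k}^i$ from the synchronization point $x_t$. First I would write $x_{t,k}^i - x_t = -\eta_L \sum_{\tau=0}^{k-1} g_{t,\tau}^i$, where $g_{t,\tau}^i$ is the stochastic mini-batch gradient evaluated at $x_{t,\tau}^i$ on the mixture of rated and pseudo items, and then take expectations. Using the standard trick $\|a\|^2 \le \frac{1}{1-\beta}\|\mathbb{E}[\cdot]\|^2 + \dots$ together with the fact that $K$ terms appear in the sum (so that a factor $K$ enters via Cauchy--Schwarz / Jensen over the $\tau$-index), I would split the mini-batch gradient into: (i) the zero-mean sampling noise, which Assumption~\ref{Assumprion 2} bounds by $\sigma_L^2$; (ii) the pseudo-item approximation bias $\nabla\bar F - \nabla\tilde F$, scaled by $(1-\alpha)$ because only the fraction $(1-\alpha)$ of the batch is pseudo items, which Assumption~\ref{assumption 4} bounds by $\rho^2$; and (iii) the deterministic term $\nabla F_i(x_{t,\tau}^i)$.

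Next I would handle term (iii). I would add and subtract $\nabla F_i(x_t)$ and use $L$-smoothness (Assumption~\ref{Assumption 1}) to get $\|\nabla F_i(x_{t,\tau}^i) - \nabla F_i(x_t)\|^2 \le L^2 \|x_{t,\tau}^i - x_t\|^2$, which reproduces the quantity we are bounding and will be absorbed on the left-hand side once $\eta_L \le \frac{1}{8LK}$ makes the coefficient strictly less than one. The residual $\|\nabla F_i(x_t)\|^2$ term I would then relate to the global gradient $\|\nabla f(x_t)\|^2$ by averaging over $i$ with weights $w_i$ and invoking the gradient-dissimilarity bound (Assumption~\ref{Assumption 3}), which converts $\frac{1}{m}\sum_i \|\nabla F_i(x_t)\|^2$ into $(A^2+1)\|\nabla f(x_t)\|^2 + \sigma_G^2$, producing the $30K^2(A^2+1)\eta_L^2\|\nabla f(x_t)\|^2$ term and contributing the $\sigma_G^2$ piece inside the bracket. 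Collecting the noise, bias, and dissimilarity constants, each multiplied by the appropriate power of $\eta_L$ and combinatorial factors ($K$ from the outer sum, another $K$ from the smoothness recursion), yields the stated $5K(\eta_L^2\sigma_L^2 + 6K\eta_L^2(1-\alpha)^2\rho^2 + 6K\eta_L^2\sigma_G^2)$.

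The main obstacle is bookkeeping the recursion cleanly: because $\|x_{t,\tau}^i - x_t\|^2$ reappears on the right-hand side after applying smoothness, one must sum the inequality over $k=0,\dots,K-1$ (or argue by induction) and show the self-referential term has coefficient $\le \frac{1}{2}$ (or similar), so it can be moved to the left and the factor $\frac{1}{1 - 8L^2K^2\eta_L^2}$ bounded by a constant such as $2$. Getting the constants to line up exactly with $5$, $6$, and $30$ requires care with the repeated use of $\|\sum_{j=1}^n a_j\|^2 \le n\sum \|a_j\|^2$ and with the choice of the splitting parameter $\beta$; I would not expect any conceptual difficulty beyond this, since the structure is the familiar FedAvg local-drift lemma (cf.~\cite{yang2021achieving}) augmented only by the extra $(1-\alpha)^2\rho^2$ term coming from the pseudo-item bias.
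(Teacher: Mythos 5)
Your proposal is correct and follows essentially the same route as the paper's proof: the same decomposition of the mini-batch gradient into sampling noise (Assumption~\ref{Assumprion 2}), the $(1-\alpha)$-weighted pseudo-item bias (Assumption~\ref{assumption 4}), the smoothness drift term, and the residual $\nabla F_i(x_t)$ handled via Assumption~\ref{Assumption 3}. The only (immaterial) difference is bookkeeping: the paper absorbs the self-referential drift term into a one-step recursion with factor $(1+\frac{1}{K-1})$ and bounds the resulting geometric sum by $5K$, whereas you propose summing over $k$ and moving the term to the left-hand side --- both are standard, equivalent forms of the FedAvg local-drift lemma.
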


\begin{proof}

\begin{align}
&\E_t\|x_{t,k}^i-x_t\|^2   \notag\\
&\E_t\|x_{t,k-1}^i-x_t-\eta_L ( (1-\alpha) \nabla F(x_{t,k-1}^i;\xi_{b^u}) \notag \\
& + \alpha \nabla F(x_{t,k-1}^i;\xi_{b^r}) 
-(1-\alpha)\nabla \overline{F}_i(x_{t,k-1}^i)-\alpha \nabla \overline{F}_i(x_{t,k-1}^i) \notag \\
& +(1-\alpha)\nabla \tilde{F}_i(x_{t,k-1}^i)
    -(1-\alpha)\nabla \tilde{F}_i(x_{t,k-1}^i)
    +\nabla F_i(x_{t,k-1}^i) \notag \\
&    -\nabla F_i(x_t) + \nabla F_i(x_t)  ) \|^2  \notag \\
 & \leq (1+\frac{1}{2K-1})\E_t\|x_{t,k-1}^i-x_t\|^2 \notag \\
 &+ (1-\alpha)^2\eta_L^2\E\|\nabla F(x_{t,k-1}^i;\xi_{b^u}) 
 -\nabla \tilde{F}_i(x_{t,k-1}^i) \|^2  \notag \\
 &+ \alpha^2 \eta_L^2\E\| \nabla F(x_{t,k-1}^i;\xi_{b^r}) - \nabla \overline{F}_i(x_{t,k-1}^i) \|^2  \notag \\
 &+ 6K(1-\alpha)^2\eta_L^2\E\|\nabla \tilde{F}_i(x_{t,k-1}^i) - \overline{F}_i(x_{t,k-1}^i)\|^2   \notag \\
 &+ 6K\eta_L^2\E\|\nabla F_i(x_{t,k-1}^i)-\nabla F_i(x_t)\|^2 + 6K\eta_L^2\E\|\nabla F_i(x_t)\|^2  \notag \\
 & \leq (1+\frac{1}{K-1})\E_t\|x_{t,k-1}^i-x_t\|^2+ \eta_L^2\sigma_{L}^2 +6K\eta_L^2(1-\alpha)^2\rho^2 \notag \\
 &+6K\eta_L^2\sigma_G^2+6K\eta_L^2(A^2+1)\|\nabla f(x_t)\|^2 \, .
\end{align} 
Unrolling the recursion, we obtain:
\begin{align}
&\E_t\|x_{t,k}^i-x_t\|^2 \leq \sum_{p=0}^{k-1}(1+\frac{1}{K-1})^p\left[\eta_L^2\sigma_{L}^2+6K\eta_L^2(1-\alpha)^2\rho^2 \right. \notag \\
&\left. +6K\eta_L^2\sigma_{G}^2+6K(A^2+1)\|\eta_L\nabla f(x_t)\|^2\right] \notag \\
&\leq (K-1)\left[(1+\frac{1}{K-1})^K-1\right]\left[\eta_L^2\sigma_{L}^2 \right.\notag \\
&\left. +6K\eta_L^2(1-\alpha)^2\rho^2+6K\eta_L^2\sigma_{G}^2+6K(A^2+1)\|\eta_L\nabla f(x_t)\|^2\right] \notag \\
&\leq 5K(\eta_L^2\sigma_L^2+6K\eta_L^2(1-\alpha)^2\rho^2+6K\eta_L^2\sigma_G^2) \notag \\
&+ 30K^2(A^2+1)\eta_L^2\|\nabla f(x_t)\|^2 \, .
\end{align}
\end{proof}

\section{Numerical results}
\label{sec experiments}

In this section, we present simulation results to validate the performance of the FedRec+ and compare it with the vanilla pseudo method (FedRec). We use two widely used benchmark datasets for recommendation including ML-100K and ML-1M. ML100K contains 100, 000 ratings of 1, 682 movies from 943 users; ML1M contains 1, 000, 209 ratings of 3, 952 movies from 6, 040 users, while both have rating levels of $[1,2,\cdots,5]$. We process each dataset follows the setting in ~\cite{liang2021fedrec++}:  (i) randomly dividing the dataset into five equal parts, (ii) using four parts for training and one part for testing, (iii) repeating this process four times to obtain five distinct sets of training and test data. Our experimental analysis is based on these five datasets, and we report the average performance across all five.

In our experiment, we use the PMF~\cite{mnih2007probabilistic} as the backbone model and we use three commonly used evaluation metrics, i.e., MAE, RMSE, and NMSE, for performance evaluation.

Table~\ref{comparison table} demonstrates the superior performance of FedRec+ over FedRec across all three recommendation metrics. This indicates our proposed algorithm is effective.

\begin{table}[!t]
\vspace{-.5em}
		\caption{Recommendation performance of FedRec and FedRec+ in ML1M and ML100K.}
		\label{comparison table}
  \vspace{-1.em}
  \begin{center}
  \resizebox{.46\textwidth}{!}{%
		\begin{tabular}{c|c|ccccccccc}
			\toprule
			\multirow{1}{*}{Data}&Algorithm
			&MAE&RMSE &NMSE \cr
			\midrule
            \midrule
			\multirow{2}*{ML-1M}
			& FedRec &  0.8962\resizebox{!}{.5\normalbaselineskip}{{\transparent{0.5}±0.0002}}  &1.1178\resizebox{!}{.5\normalbaselineskip}{{\transparent{0.5}±0.0003}}  & 0.0893\resizebox{!}{.5\normalbaselineskip}{{\transparent{0.5}±0.0005}} \cr
			~ & FedRec+  &0.8929\resizebox{!}{.5\normalbaselineskip}{{\transparent{0.5}±0.0002}}  &1.1014\resizebox{!}{.5\normalbaselineskip}{{\transparent{0.5}±0.0003}}  & 0.0879\resizebox{!}{.5\normalbaselineskip}{{\transparent{0.5}±0.0009}} \cr
			\midrule
			\multirow{2}*{ML-100K}
			& FedRec &1.0120\resizebox{!}{.5\normalbaselineskip}{{\transparent{0.5}±0.0002}}   &1.3179\resizebox{!}{.5\normalbaselineskip}{{\transparent{0.5}±0.0021}}  & 0.1236\resizebox{!}{.5\normalbaselineskip}{{\transparent{0.5}±0.0009}} \cr
			~ & FedRec+  &0.9109\resizebox{!}{.5\normalbaselineskip}{{\transparent{0.5}±0.0021}}  &1.1713\resizebox{!}{.5\normalbaselineskip}{{\transparent{0.5}±0.0022}}  & 0.1006\resizebox{!}{.5\normalbaselineskip}{{\transparent{0.5}±0.0008}} \cr
			\bottomrule
		\end{tabular}
  }
 \end{center}
\end{table}


\begin{figure}
    \begin{center}
    \centerline{\includegraphics[width=0.8\columnwidth]{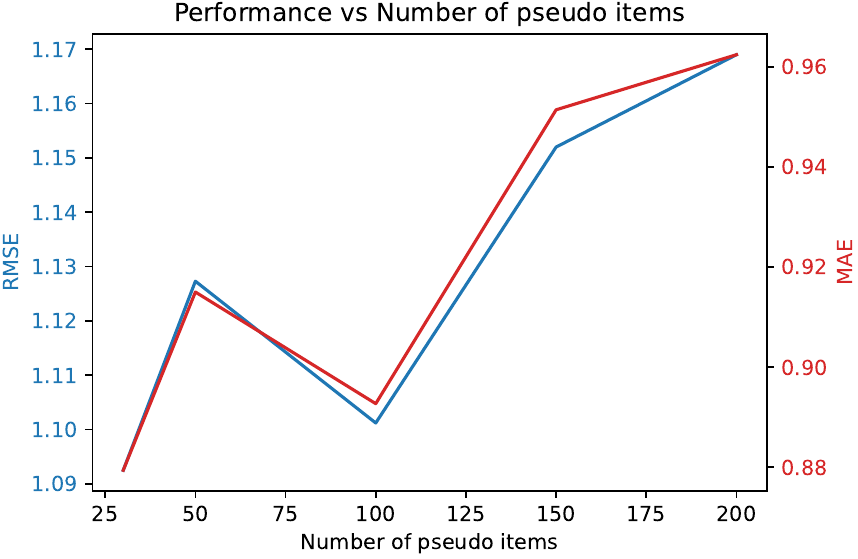}}
    \caption{Performance of FedRec+ under different numbers of pseudo items.}
    \label{num of pseudo}
    \end{center}
    \vspace{-1.em}
\end{figure}

Figure~\ref{num of pseudo} illustrates the impact of varying numbers of pseudo items on ML1M. It indicates that higher numbers of pseudo items result in increased noise and consequently worse performance, aligning with our theoretical analysis.








\section{Conclusion and future works}
\label{sec conclusion}

This paper focuses on the problem of privacy-aware federated recommendation with explicit feedback. We propose FedRec+, a privacy-preserving framework that addresses this issue. FedRec+ utilizes feature similarity to generate low-noise pseudo items without client communication. Furthermore, we employ the Wasserstein Distance to optimize the aggregation probability, which helps handle the heterogeneity of the federated recommendation system. Convergence analysis is conducted to demonstrate the impact of pseudo items and aggregation probability. FedRec+ is a versatile solution that can be combined with other privacy-aware recommendation methods, such as differential privacy~\cite{dolui2019towards}. Our experimental results, based on public datasets, validate the effectiveness of FedRec+.

\section{ACKNOWLEDGEMENTS}
This work is supported in part by the National Natural Science Foundation of China under Grant No. 62001412, in part by the funding from Shenzhen Institute of Artificial Intelligence and Robotics for Society, in part by the Shenzhen Key Lab of Crowd Intelligence Empowered Low-Carbon Energy Network (Grant No. ZDSYS20220606100601002), and in part by the Guangdong Provincial Key Laboratory of Future Networks of Intelligence (Grant No. 2022B1212010001).

\bibliography{example_paper}
\bibliographystyle{plain}



\end{document}